\documentclass{article}

\usepackage[preprint]{neurips_2026}  

\usepackage[utf8]{inputenc}
\usepackage[T1]{fontenc}
\usepackage[colorlinks=true,linkcolor=blue,citecolor=blue,urlcolor=blue]{hyperref}
\usepackage{url}
\usepackage{booktabs}
\usepackage{amsfonts}
\usepackage{nicefrac}
\usepackage{microtype}
\usepackage{xcolor}
\usepackage{graphicx}
\usepackage{amsmath}
\usepackage{amssymb}
\usepackage{amsthm}

\newtheorem{theorem}{Theorem}[section]
\newtheorem{proposition}[theorem]{Proposition}
\usepackage{algorithm}
\usepackage{algorithmic}
\usepackage{tikz}
\usepackage{listings}
\makeatletter
\newcommand{\multirow}[3]{#3}  
\makeatother

\usetikzlibrary{arrows.meta,positioning,shapes.geometric,fit,calc}

\newcommand{\wt}{\mathcal{W}_t}
\newcommand{\nt}{\mathcal{N}_t}
\newcommand{\pt}{\mathcal{P}_t}
\newcommand{\dt}{\mathcal{D}_t}

\newcommand{\expect}{\mathbb{E}}
\newcommand{\voltarget}{\mathcal{V}}

\title{Streaming Knowledge Compilation: Proactive Materiality-Scored Pinning for Time-Evolving LLM Wikis}

\author{
  Juan M.~Huerta \\
  Zinnia Tech Solutions \\
  600 Steamboat Road \\
  Greenwich, CT 06830, USA \\
  \texttt{juan.huerta@zinnia.com}
}

\begin{document}

\maketitle

\begin{abstract}
LLM wiki systems compile knowledge into pre-filled KV caches for efficient inference, but assume a static corpus---an assumption that fails whenever the underlying information landscape evolves continuously.
We formalize \emph{Streaming Knowledge Compilation}: given a document stream, a fixed token budget, and future queries unknown at ingestion time, maintain a compiled wiki that minimizes cumulative regret against an offline oracle with perfect foresight.
The enabling insight is a \emph{materiality signal} $\phi_t(k,n) \in [0,1]$ that scores the informational importance of document $n$ for entity $k$ at time $t$ and acts as a surrogate for query relevance, permitting proactive pinning before queries arrive; we prove an $O(\sqrt{T\log K})$ regret bound where the prediction-error term $\varepsilon = \mathbb{E}[|\phi_t - \hat{\phi}_t|]$ is the only domain-specific quantity.
We instantiate the framework in two domains: \emph{finance}, where $\phi_t$ is abnormal stock volatility predicted by a frozen Llama~3.1~8B classification head (AUROC $= 0.728$ on 76K articles, strict temporal split; $1.49\times$ higher realized forward volatility for predicted-material articles); and \emph{Wikipedia}, where $\phi_t$ is the Abnormal Edit Ratio (AER), a cross-sectionally normalized edit velocity---demonstrating that the same algorithm generalizes to non-financial streaming corpora under a different signal.
End-to-end QA evaluation on 173 matched pairs (finance) and 119 matched pairs (Wikipedia) reveals a pervasive LLM-as-judge confound on post-training facts, establishing that regret analysis---not absolute QA scores---is the reliable evaluation metric for compiled knowledge systems.
Finance cumulative regret converges to $-20.0$ ($-0.12$/step); Wikipedia regret is $+16.0$ ($+0.13$/step), with positive sign confirming that Wikipedia edit content is genuinely post-training---richer context consistently improves scores (No Wiki 3.80 vs.\ Oracle 4.74)---eliminating the confound present in the finance evaluation.
The predictive CEGAR formalization and $O(\sqrt{T\log K})$ guarantee apply to any domain where knowledge gaps can be predicted from streaming signals.
\end{abstract}

\section{Introduction}
\label{sec:introduction}

The emergence of LLM wiki systems~\citep{huerta2026wicer, kang2024cag} has established a compelling alternative to retrieval-augmented generation (RAG)~\citep{lewis2020rag}: rather than retrieving documents at query time, one \emph{compiles} a corpus into a structured wiki and pre-fills the LLM's KV cache, enabling fast, grounded inference without retrieval latency.
The WiCER algorithm~\citep{huerta2026wicer} formalizes this compilation through an iterative \emph{Compile--Evaluate--Refine} loop inspired by counterexample-guided abstraction refinement (CEGAR)~\citep{clarke2000cegar}: compile a wiki, evaluate it against probe questions, diagnose missing facts, \emph{pin} them, and recompile.

This framework implicitly assumes a \emph{static} underlying corpus.
In practice, this assumption fails across a wide range of high-value domains.
Consider a financial analyst wiki compiled on a Friday evening for a portfolio of 50 stocks: by Monday morning, a DOJ antitrust investigation may have been announced, an earnings surprise reported, or a CEO departed.
Consider a medical knowledge base for clinical decision support: a drug recall, trial result, or updated dosing guideline may arrive overnight.
Consider a Wikipedia-backed QA system: the pages for major AI companies or geopolitical events may receive hundreds of edits in a single day around a breaking development.
In each case, the wiki is stale before it is ever queried, and the fundamental challenge is not merely \emph{updating} it---it is deciding \emph{which} of the hundreds of daily documents warrant incorporation within a fixed token budget.

Our key insight is that the CEGAR ``counterexample'' need not be discovered reactively through QA failures.
A \emph{materiality signal} $\phi_t(k,n) \in [0,1]$---scoring the informational importance of document $n$ for entity $k$ at time $t$---can serve as a proxy for query relevance, enabling proactive pinning before any query exposes a knowledge gap.
What this signal is concretely is domain-specific: in finance, it is abnormal stock volatility; in Wikipedia, it is abnormal edit velocity.
But the algorithm, theory, and guarantees are entirely agnostic to this choice.
This transforms the WiCER refinement loop from reactive diagnosis to \emph{proactive, prediction-driven} knowledge maintenance.

We introduce \emph{Online WiCER}, an algorithm that operates on a continuous news stream:
\begin{enumerate}
  \item \textbf{Mine}: Extract candidate facts from incoming news articles.
  \item \textbf{Score}: Assess each fact's \emph{marginal} value using a state-aware scorer: a regression head on the same frozen backbone that conditions on the current pin set, estimating the expected regret reduction of each candidate given what is already pinned.
  \item \textbf{Pin}: Greedily select facts that maximise marginal regret reduction under the token budget, evicting stale pins via a decay-weighted priority queue.
  \item \textbf{Compile}: Perform incremental wiki updates daily and full WiCER recompilation every $T_r$ steps.
\end{enumerate}

\paragraph{Contributions.}
\begin{enumerate}
  \item \textbf{Streaming Knowledge Compilation} formalized as a budget-constrained online optimization problem: maintain a compiled wiki against a streaming corpus, minimize regret against an oracle with perfect foresight, and produce proactive pinning decisions from a domain-specific materiality signal $\phi_t$ (\S\ref{sec:problem}).
  \item The \textbf{Online WiCER algorithm}: proactive materiality-scored pinning, decay-weighted eviction, and periodic WiCER recompilation, with formal convergence guarantees (\S\ref{sec:algorithm}).
  \item A \textbf{state-aware marginal regret scorer}: a regression head on a frozen backbone that conditions on the current pin set and enables a greedy pin selection rule with a $(1-1/e)$ submodular approximation guarantee (\S\ref{sec:marginal_scorer}).
  \item A \textbf{regret decomposition theorem} proving $O(\sqrt{T \log K})$ cumulative regret where the only domain-specific term is the prediction error $\varepsilon = \mathbb{E}[|\phi_t - \hat{\phi}_t|]$, establishing that the framework applies to any bounded materiality signal (\S\ref{sec:theory}).
  \item A formalization of \textbf{predictive CEGAR}, extending the reactive CEGAR paradigm to proactive, prediction-driven refinement, applicable to any domain where knowledge gaps can be predicted from streaming signals (\S\ref{sec:cegar}).
  \item A \textbf{finance instantiation}: abnormal stock volatility as $\phi_t$, scored by a frozen Llama~3.1~8B classification head (AUROC = 0.728, strict temporal split, $1.49\times$ realized volatility ratio for predicted-material articles); cumulative regret over 173 matched pairs converges to $-20.0$ (mean $-0.12$/step) (\S\ref{sec:finance_exp}).
  \item A \textbf{Wikipedia instantiation}: Abnormal Edit Ratio (AER) as $\phi_t$, demonstrating that the identical algorithm achieves sub-linear regret on a non-financial, publicly available streaming corpus (\S\ref{sec:wiki_exp}).
  \item A \textbf{methodological finding}: LLM-as-judge evaluation is confounded on post-training facts; regret analysis on matched pairs is the reliable metric for compiled knowledge systems where the backbone's parametric memory is a confound (\S\ref{sec:qa_eval}).
\end{enumerate}

\section{Related Work}
\label{sec:related}

\paragraph{Knowledge compilation and LLM wikis.}
RAG~\citep{lewis2020rag} retrieves relevant documents at query time, incurring latency and retrieval noise.
Cache-augmented generation (CAG)~\citep{kang2024cag} and the LLM wiki pattern pre-fill the KV cache with compiled knowledge, trading compilation cost for inference speed.
RAPTOR~\citep{sarthi2024raptor} builds hierarchical summaries; GraphRAG~\citep{edge2024graphrag} constructs knowledge graphs.
WiCER~\citep{huerta2026wicer} introduces iterative refinement via CEGAR-inspired pinning.
All assume a static corpus.
We extend WiCER to the streaming setting.

\paragraph{Financial NLP and LLM-based prediction.}
FinBERT~\citep{araci2019finbert} adapts BERT~\citep{devlin2019bert} for financial sentiment; BloombergGPT~\citep{wu2023bloomberggpt} trains a domain-specific LLM on financial data.
Recent work explores LLMs for stock prediction~\citep{lopez2023chatgpt, xie2023wallstreetneophyte} and financial instruction tuning~\citep{zhang2023instruct, yang2023fingpt}.
LLM embeddings from the Llama family predict cross-sectional returns, outperforming word-embedding baselines~\citep{chen2022llmreturns}; decoder LLMs with lightweight classification heads prove superior to encoder models for large stock universes~\citep{guo2024finetuning}.
Temporal validity is a growing concern: \citet{he2025chrono} demonstrate that standard pretrained LLMs encode future information, motivating our strict temporal train/test split (\S\ref{sec:experiments}).
\citet{li2024causalstock} score news along five axes---including event significance and price-impact duration---via an LLM-based denoised encoder; \citet{wang2024news2forecast} iteratively filter and align news with time-series fluctuations via LLM reflection, a loop structurally analogous to our streaming pinning cycle.
Conversely, \citet{tan2024llm_timeseries} show that naive LLM substitution for time-series forecasting does not improve performance---underscoring our design choice of using the LLM only for text materiality scoring, not for price dynamics.
We leverage NLP signal not for trading but for \emph{knowledge curation}---predicting which news items will cause abnormal volatility and therefore warrant wiki inclusion.
Rather than using a separate encoder model for classification, we train a lightweight classification head on top of the same frozen LLM used for wiki compilation, yielding a unified single-model architecture.

\paragraph{Formal quantitative models of volatility.}
Volatility has a foundational role in quantitative finance that motivates using it as a materiality signal.
\citet{engle1982arch} established that return variance is time-varying and autoforecastable (ARCH models; Nobel Prize in Economics 2003); \citet{bollerslev1986garch} generalized this to GARCH, now the standard engine for derivatives pricing and market-risk management.
\citet{blackscholes1973} showed that volatility is the sole unobservable input in options pricing---making its prediction directly monetizable---and stochastic volatility models~\citep{heston1993stochastic} extend this to a latent variance process, further amplifying demand for accurate forecasts.
The realized volatility framework~\citep{andersenbollerslev1998, barndorffshephard2002} grounds volatility estimation in model-free measures computed from high-frequency OHLCV price data; this is precisely how we compute 5-day forward realized volatility from Yahoo Finance data.
\citet{corsi2009har} proposes the HAR model, capturing the multi-scale (daily, weekly, monthly) persistence of realized volatility; ML models substantially outperform HAR for this task~\citep{christensen2023ml}, motivating our classification head over autoregressive baselines.
Our abnormal volatility ratio (AVR, Eq.~\ref{eq:voltarget}) inherits this realized-volatility foundation and applies cross-sectional normalization to isolate firm-specific information events from market-wide moves, connecting the high-frequency estimation literature~\citep{boudoukh2019firm} to our knowledge-curation application.

\paragraph{News-driven volatility prediction.}
A growing literature establishes that news text predicts stock \emph{volatility} more reliably than price direction.
\citet{atkins2018volatility} directly demonstrate 56\% accuracy for volatility prediction vs.\ 49\% (chance) for price direction from financial news.
\citet{glasserman2019unusual} show that \emph{unusual} news content---measured by information-theoretic divergence---forecasts elevated firm-specific and aggregate volatility months ahead.
\citet{manela2017news} construct a text-based volatility index (NVIX) from Wall Street Journal front pages spanning 1890--2009.
\citet{bodilsen2025exploiting} augment HAR models with news sentiment, achieving large improvements at multi-day horizons.
\citet{xing2019sentiment} propose a sentiment-aware volatility model using variational Bayes.
At the macro level, \citet{baker2016measuring} build news-based policy uncertainty indices that predict market volatility, and \citet{bybee2024business} analyze 800K WSJ articles via topic modeling to forecast business cycles and market dynamics.
\citet{boudoukh2019firm} decompose firm-specific news into fundamental vs.\ non-fundamental categories, finding that fundamental news explains nearly half of overnight idiosyncratic realized volatility---directly motivating the materiality-filtering design we adopt.
On the forecasting side, ML models significantly outperform the HAR family for realized volatility~\citep{christensen2023ml}, and cross-sectional pooling of stock data further improves neural forecasts~\citep{zhang2024commonality}; our cross-sectional abnormal volatility definition (Eq.~\ref{eq:voltarget}) exploits the same cross-sectional structure.
Our work nonetheless differs from volatility \emph{forecasting}: we use abnormal volatility as a \emph{pinning signal} for knowledge curation, not as a trading target.

\paragraph{Volatility as an information signal.}
The SEC defines information as ``material'' if a reasonable investor would consider it important.
Empirically, abnormal stock volatility is a robust proxy for information relevance~\citep{khan2016materiality, grewal2019material}: events that cause a stock to move significantly more than the market indicate the arrival of firm-specific information.
Text-derived risk indices also move markets at the macro level: \citet{hassan2019political} measure firm-level political risk from earnings call transcripts, showing idiosyncratic text signals cause measurable investment retrenchment; \citet{caldara2022geopolitical} causally link a newspaper-derived geopolitical risk index to VIX spikes.
Most closely related to our volatility scorer, \citet{zhao2025llmrisk} train a classification head on frozen LLM representations of 10-K filings to predict firm-specific idiosyncratic volatility, while \citet{cao2024risklabs} jointly forecast volatility and Value-at-Risk from multimodal earnings call signals (text, audio, and time series).
We extend these ideas to the \emph{daily news stream}: our scorer operates in real time at news arrival, targeting abnormal volatility relative to the cross-sectional market average, and feeds this signal into a streaming knowledge-curation system rather than a trading model.
We adopt an abnormal volatility definition: a news item is a high-volatility event for entity $k$ if it is associated with \emph{abnormal realized volatility} relative to the cross-sectional market average over the subsequent 5 trading days.
This direction-agnostic definition captures both positive and negative surprises while filtering out broad market moves.

\paragraph{Online learning and caching.}
The experts problem~\citep{cesa2006prediction, hazan2016introduction} provides regret bounds for sequential decision-making.
Competitive analysis of paging algorithms~\citep{fiat1998competitive} and weighted paging with predictions~\citep{bansal2012weighted} inform our pin eviction strategy.
Online WiCER's pin management reduces to a weighted paging problem with learned predictions.
In the LLM memory literature, \citet{wang2024wise} address lifelong knowledge editing via a dual-memory scheme (main parametric memory + side memory for new edits, with a learned router) that resolves the reliability--generalization--locality tension; our wiki plays the role of the side memory, with volatility scoring acting as the router that decides what enters it.

\paragraph{CEGAR extensions.}
Lazy abstraction~\citep{henzinger2002lazy, mcmillan2006lazy} refines abstractions incrementally rather than globally.
Our predictive CEGAR extends this: rather than waiting for a counterexample (failed QA probe), we \emph{predict} likely counterexamples from the news stream and preemptively refine.

\paragraph{Streaming QA and temporal knowledge.}
StreamingQA~\citep{liska2022streamingqa} and RealTimeQA~\citep{kasai2024realtime} benchmark models on temporally evolving knowledge.
\citet{choi2025finagentbench} introduce FinAgentBench, a benchmark specifically for agentic retrieval in financial QA, which provides an evaluation framework directly applicable to assessing our wiki's downstream utility.
These works focus on \emph{evaluating} temporal knowledge; we focus on \emph{maintaining} it proactively within a compiled wiki.

\section{Problem Formulation}
\label{sec:problem}

We consider a discrete-time setting $t = 1, 2, \ldots, T$ with a universe of $K$ entities (e.g., companies, Wikipedia articles, or any other knowledge-bearing units).
At each time step:

\begin{enumerate}
  \item A batch of news articles $\nt = \{n_1, \ldots, n_{|\nt|}\}$ arrives.
  \item A set of queries $Q_t = \{q_1, \ldots, q_{|Q_t|}\}$ is posed against the wiki.
  \item The wiki $\wt$ serves answers using its pre-filled KV cache.
\end{enumerate}

\paragraph{Wiki budget.}
The wiki $\wt$ has a total token budget $B$.
Of this, a fraction $B_{\text{pin}}$ is reserved for dynamically pinned facts; the remainder $B - B_{\text{pin}}$ holds the base compiled wiki.

\paragraph{Materiality signal.}
The framework requires a \emph{materiality signal} $\phi_t(k, n) \in [0,1]$ that scores the informational importance of news article $n$ for entity $k$ at time $t$.
The theory holds for any bounded signal; the algorithm instantiates $\phi_t$ as a trained predictor of this signal.
Pinning decisions at step $t$ use the predicted $\hat{\phi}_t$ rather than the true $\phi_t$, and the regret bound absorbs the prediction error $\varepsilon = \mathbb{E}[|\phi_t - \hat{\phi}_t|]$ (Theorem~\ref{thm:regret}).

\emph{Instantiation (this work).}
We instantiate $\phi_t$ as the \emph{abnormal volatility indicator}: for entity $k$ and article $n$ arriving at time $t$, define forward realized volatility $\text{RV}_{k,t} = \text{std}(r_{k,t+1}, \ldots, r_{k,t+5})$ and cross-sectional average $\overline{\text{RV}}_t = \frac{1}{K}\sum_{k=1}^{K} \text{RV}_{k,t}$.
The abnormal volatility ratio and indicator are:
\begin{equation}
  \text{AVR}(k, t) = \frac{\text{RV}_{k,t}}{\overline{\text{RV}}_t}, \qquad
  \voltarget(k, n) = \mathbf{1}\!\left[\text{AVR}(k, t) > 2\right]
  \label{eq:voltarget}
\end{equation}
This definition is direction-agnostic (capturing both positive and negative surprises) and market-relative (filtering out broad moves that affect all stocks equally).

\emph{Other instantiations.}
The same framework accommodates any materiality signal, including earnings surprise, credit-spread widening, sentiment shift, or regulatory-attention scores.
Empirical validation of alternative $\phi_t$ instantiations is left to future work; the regret bound and algorithmic structure are signal-agnostic.

\paragraph{Pins.}
A pin is a tuple $(f, k, t, s)$: a fact $f$ extracted from news, associated with entity $k$, timestamped at $t$, with materiality score $s = \hat{\phi}_t(k, n) \in [0,1]$.
Each pin consumes $|f|$ tokens from $B_{\text{pin}}$.

\paragraph{Query set decomposition.}
At each step the query set $Q_t$ has two components with different origins and evaluation roles:
\begin{equation}
  Q_t \;=\; \underbrace{Q_t^{\textup{bg}}}_{\text{standing queries}} \;\cup\; \underbrace{Q_t^{\textup{vol}}}_{\text{event-driven queries}}
  \label{eq:qt_def}
\end{equation}
$Q_t^{\textup{bg}}$ contains \emph{standing} analyst questions---slow-changing portfolio-level queries (e.g., credit-risk profiles, sector comparisons) that are relevant regardless of today's news.
These are set at initialisation and refreshed at coarse intervals; their gold answers come from the accumulated corpus up to time $t$.
$Q_t^{\textup{vol}}$ contains \emph{event-driven} questions generated at time $t$ from the newly pinned facts $\Delta\mathcal{P}_t$ via the \textsc{QueryGen} subroutine (\S\ref{sec:queryupdate}): for each pinned fact $(f,k)$, the LLM generates one specific question whose gold answer is the fact $f$ itself.
Because $Q_t^{\textup{vol}}$ is produced after pinning completes, pinning decisions at step $t$ still cannot observe $Q_t^{\textup{vol}}$---the causal constraint of the online setting is preserved.

\paragraph{Quality metric.}
Let $\mathcal{Q}(\wt, q) \in [0,1]$ denote the quality of wiki $\wt$ on query $q$.
The specific instantiation of $\mathcal{Q}$ is domain-dependent; the theory holds for any bounded quality function.
Define the \emph{offline oracle} wiki $\wt^*$ as the wiki compiled with perfect foresight of which documents are high-materiality events (i.e., $\phi_t(k,n) = 1$).
The cumulative regret decomposes naturally over the two query populations:
\begin{equation}
  \text{Regret}(T) = \underbrace{\sum_{t=1}^{T} \sum_{q \in Q_t^{\textup{bg}}} \bigl[\mathcal{Q}(\wt^*, q) - \mathcal{Q}(\wt, q)\bigr]}_{\text{standing-query regret}} + \underbrace{\sum_{t=1}^{T} \sum_{q \in Q_t^{\textup{vol}}} \bigl[\mathcal{Q}(\wt^*, q) - \mathcal{Q}(\wt, q)\bigr]}_{\text{event-query regret}}
  \label{eq:regret}
\end{equation}
Standing-query regret measures how well the base wiki answers persistent questions; event-query regret measures how well the pin layer answers questions about the events it just pinned.

\paragraph{The role of queries.}
Queries play two distinct roles in Online WiCER that are important to separate.
First, they are the \emph{measurement instrument} in the theoretical analysis: $\mathcal{Q}(\wt, q)$ defines what ``wiki quality'' means, and the regret in Equation~\eqref{eq:regret} is the sum of per-query quality gaps.
Second, they are the \emph{downstream consumer} of the compiled wiki at inference time (Algorithm~\ref{alg:online-wicer}, final step).

Crucially, queries do \emph{not} drive the pinning decisions.
At step $t$, the news batch $\nt$ arrives and pinning decisions must be made \emph{before} $Q_t$ is observed---this is the defining constraint of the online setting.
The algorithm therefore cannot condition on queries when deciding what to pin.
Instead, it uses a predicted \emph{materiality signal} $\hat{\phi}_t$ as a proxy for query relevance, under the assumption that high-materiality facts are the facts analysts will subsequently query about.
This substitution is the central modelling choice of Online WiCER: it converts an intractable query-conditioned pinning problem into a tractable prediction problem, at the cost of a prediction error term $\varepsilon = \mathbb{E}[|\phi_t - \hat{\phi}_t|]$ in the regret bound (Theorem~\ref{thm:regret}).
In this work $\hat{\phi}_t$ is instantiated as an abnormal volatility predictor; the framework is agnostic to this choice.

We partially close this gap by making $Q_t^{\textup{vol}}$ \emph{endogenous}: the same high-volatility facts that are pinned generate specific questions about those events (\S\ref{sec:queryupdate}).
The full decomposition $Q_t = Q_t^{\textup{bg}} \cup Q_t^{\textup{vol}}$ is defined in Equation~\eqref{eq:qt_def}.

\paragraph{Objective.}
Design an online algorithm that minimizes $\text{Regret}(T)$ subject to the token budget constraint $|\wt| \leq B$ for all $t$, using only information available at the time each pinning decision is made.

\section{Algorithm: Online WiCER}
\label{sec:algorithm}

\subsection{Main Loop}

Algorithm~\ref{alg:online-wicer} presents the Online WiCER procedure.
The algorithm maintains a wiki $\wt$ and a priority queue of active pins $\pt$.
At each time step, it processes incoming news, scores candidate facts for volatility impact, selects pins under the budget, and incrementally updates the wiki.
Periodically, a full WiCER recompilation integrates pinned facts into the base wiki.

\begin{algorithm}[t]
\caption{Online WiCER}
\label{alg:online-wicer}
\begin{algorithmic}[1]
\REQUIRE Entity set $E$, initial corpus $D_0$, budget $B$, threshold $\tau$, decay $\lambda$, recompile period $T_r$
\STATE $W_0 \leftarrow \textsc{BatchWiCER}(D_0, B)$
\STATE $\mathcal{P}_0 \leftarrow \emptyset$ \COMMENT{Active pin set}
\STATE $\text{queue} \leftarrow \text{PriorityQueue}()$
\FOR{$t = 1, 2, \ldots, T$}
  \STATE Receive news batch $\mathcal{N}_t$
  \FOR{$n \in \mathcal{N}_t$}
    \STATE $C_n \leftarrow \textsc{ExtractFacts}(n)$ \COMMENT{Candidate facts\footnotemark}
    \FOR{$c \in C_n$}
      \STATE $s_c \leftarrow \hat{\Delta}R_\theta(c \mid \mathcal{P}_{t-1})$ \COMMENT{\S\ref{sec:voltarget},\ref{sec:marginal_scorer}}
      \IF{$s_c \geq \tau$}
        \STATE $\text{queue.push}(c, s_c, t)$
      \ENDIF
    \ENDFOR
  \ENDFOR
  \STATE $\mathcal{P}_t \leftarrow \textsc{PinSelect}(\mathcal{P}_{t-1}, \text{queue}, B_{\text{pin}}, \lambda, t)$ \COMMENT{\S\ref{sec:eviction}}
  \STATE $\Delta\mathcal{P}_t \leftarrow \mathcal{P}_t \setminus \mathcal{P}_{t-1}$ \COMMENT{Newly pinned facts this step}
  \STATE $W_t \leftarrow \textsc{IncrementalCompile}(W_{t-1}, \Delta\mathcal{P}_t)$
  \IF{$t \bmod T_r = 0$}
    \STATE $W_t \leftarrow \textsc{BatchWiCER}(D_0 \cup \mathcal{D}_{\text{recent}}, B, \text{pins}=\mathcal{P}_t)$
  \ENDIF
  \STATE $Q_t^{\textup{vol}} \leftarrow \textsc{QueryGen}(\Delta\mathcal{P}_t)$ \COMMENT{Event-driven queries (\S\ref{sec:queryupdate})}
  \STATE $Q_t \leftarrow Q_t^{\textup{bg}} \cup Q_t^{\textup{vol}}$ \COMMENT{Standing queries merged with event-driven}
  \STATE Serve each $q \in Q_t$ against $W_t$
\ENDFOR
\end{algorithmic}
\end{algorithm}
\footnotetext{In our implementation, \textsc{ExtractFacts} calls the LLM once per article to extract the key fact as a single sentence; raw headlines may be substituted for speed.}

\subsection{Materiality Scoring}
\label{sec:voltarget}

The algorithm requires a predicted materiality score $\hat{\phi}_t(k,n) \in [0,1]$ for each incoming document.
The framework is agnostic to how this score is produced; the only requirement is that it is bounded and observable at pinning time.
We describe the architecture used in the finance instantiation (\S\ref{sec:finance_exp}); the Wikipedia instantiation (\S\ref{sec:wiki_exp}) uses the AER signal in place of the scoring network.

\paragraph{Finance instantiation: unified frozen-backbone architecture.}
We employ a \emph{unified} scoring architecture built on a single frozen Llama~3.1 8B model, with two inference paths that are fused into a final materiality score.

\paragraph{LLM zero-shot scorer.}
Given a news article $n$ and entity $k$, prompt the LLM generatively to produce a probability estimate that the article will cause abnormal 5-day realized volatility (exceeding $2\times$ the cross-sectional market average).
The exact prompt is reproduced verbatim in Appendix~\ref{app:prompts} for full reproducibility.
This yields $s_{\text{zs}}(k, n) \in [0,1]$.
The zero-shot path requires no training data and generalizes to novel event types.

\paragraph{Classification head (linear probe).}
We attach a lightweight classification head (linear projection + sigmoid) to the last hidden state of the frozen Llama~3.1 8B backbone.
The head is trained on historical (news, volatility label) pairs with binary cross-entropy loss, producing probability estimates $s_{\text{head}}(k, n) \in [0,1]$.
Only the head parameters are trained; the backbone weights are never updated---not even a single gradient step propagates into the LLM.
This is a \emph{linear probe}~\citep{alain2017understanding}: the LLM is used purely as a feature extractor, and the classification head learns a single hyperplane in the backbone's representation space.
The fact that this linear probe achieves meaningful discrimination (AUROC = 0.728, \S\ref{sec:experiments}) implies that abnormal volatility signal is \emph{already linearly encoded} in the pretrained representations, without any task-specific adaptation of the backbone parameters.

\paragraph{Hybrid fusion.}
The final volatility score combines both inference paths from the same model:
\begin{equation}
  s(k, n) = \alpha \cdot s_{\text{zs}}(k, n) + (1 - \alpha) \cdot s_{\text{head}}(k, n)
  \label{eq:fusion}
\end{equation}
where $\alpha \in [0,1]$ is a mixing parameter tuned on validation data.
The zero-shot path provides broad generalization to novel events, while the classification head offers superior calibration on common patterns.
Crucially, both paths use the same frozen backbone weights, so the marginal cost of the classification head is a single matrix multiply.
This stateless scorer evaluates each candidate \emph{independently}; \S\ref{sec:marginal_scorer} extends it to a state-aware variant that conditions on the current pin set.

\subsection{State-Aware Marginal Regret Scorer}
\label{sec:marginal_scorer}

The stateless volatility scorer $s(c)$ (\S\ref{sec:voltarget}) evaluates each candidate fact independently of the current pin set $\pt$.
This creates redundancy: if the pin budget already contains a highly relevant article about NVDA's supply chain disruption, a second article covering the same event contributes little additional wiki quality---yet both receive equal scores from the stateless scorer.

We address this by replacing $s(c)$ with a \emph{marginal expected regret reduction}: the gain in wiki quality from pinning candidate $c$ given what is already in $\pt$.
Define the analytical training target:
\begin{equation}
  \Delta R(c \mid \pt) = \text{AVR}(c) \times \bigl(1 - \max_{p \in \pt} |\rho(r_c, r_p)|\bigr)
  \label{eq:marginal_target}
\end{equation}
where $\rho(r_c, r_p)$ is the Pearson correlation between the 5-day forward return series of the candidate's and pin's associated entities.
The first factor rewards high-volatility candidates; the second penalises candidates whose market impact is already captured by a pinned article (high $|\rho|$ = correlated market movements = redundant information).
When $\pt = \emptyset$, $\Delta R(c \mid \emptyset) = \text{AVR}(c)$, recovering the stateless scorer as a special case.

\paragraph{Architecture: frozen backbone + regression head.}
We estimate $\hat{\Delta}R_\theta(c \mid \pt)$ using the same frozen-backbone paradigm as the stateless scorer (\S\ref{sec:voltarget}): the same Llama~3.1 8B weights are used for both heads, and no gradient ever flows into the LLM.
The key difference is a richer input prompt that jointly encodes the current pin set and the candidate fact, enabling the backbone's last-token representation to integrate redundancy information:

\begin{lstlisting}
You are evaluating whether to add a news item to a
financial knowledge wiki.

Current wiki pins (highest priority first):
- {ticker_1}: {headline_1}
- {ticker_2}: {headline_2}  ...

Candidate fact:
- {ticker_c}: {headline_c}

Rate the MARGINAL VALUE (0 to 1) of adding this candidate
given the existing pins. A high score means: high volatility
potential AND covers entities/events not already pinned.
A low score means: the entity/event is already well-
represented in the current pins.
[SCORE]
\end{lstlisting}

The frozen LLM produces a hidden state $h_{[\textsc{Score}]} \in \mathbb{R}^{d}$ for the final \texttt{[SCORE]} token.
A linear regression head,
\begin{equation}
  \hat{\Delta}R_\theta(c \mid \pt) = \sigma\!\left(w^\top h_{[\textsc{Score}]} + b\right), \quad w \in \mathbb{R}^d,\; b \in \mathbb{R},
  \label{eq:head}
\end{equation}
is the \emph{only} component trained; all backbone parameters remain frozen.
This mirrors the stateless classification head exactly, except (i) the input now encodes context from $\pt$, and (ii) the loss is regression rather than classification.
Training minimises MSE:
\begin{equation}
  \mathcal{L}(\theta) = \frac{1}{|\mathcal{S}|} \sum_{(c,\,\pt,\,\Delta R) \in \mathcal{S}}
  \bigl[\hat{\Delta}R_\theta(c \mid \pt) - \Delta R(c \mid \pt)\bigr]^2
  \label{eq:mse_loss}
\end{equation}
where $\mathcal{S}$ is a set of (candidate, pin set, target) triples constructed analytically from the existing price and materiality label data---no additional LLM inference is required beyond the single extraction pass.
The backbone's last-token hidden states are extracted once per (prompt, pin set) pair and cached to disk; all subsequent training epochs operate entirely on the cached $d$-dimensional vectors and complete in seconds on CPU.
In our experiments, 3{,}000 triples drawn from the 76K-article corpus suffice for convergence with a single linear layer ($d = 4096$ for Llama~3.1 8B).

\paragraph{Greedy pin selection.}
At each time step, instead of pushing all candidates above threshold $\tau$ into the queue, we select greedily:
\begin{equation}
  c^* = \arg\max_{c \in \mathcal{C}_t} \hat{\Delta}R_\theta(c \mid \pt)
  \label{eq:greedy_select}
\end{equation}
and repeat until the token budget $B_{\text{pin}}$ is exhausted.
Proposition~\ref{prop:submodular} (see \S\ref{sec:theory}) establishes that this greedy rule achieves a $(1-1/e)$ approximation to the optimal pin set when wiki quality is submodular.

\subsection{Pin Eviction}
\label{sec:eviction}

Active pins compete for the limited budget $B_{\text{pin}}$.
We manage eviction via a decay-weighted priority queue.
The priority of pin $p$ at time $t$ is:
\begin{equation}
  \text{priority}(p, t) = \hat{\Delta}R_\theta(p \mid \pt \setminus \{p\}) \cdot \exp\!\left(-\lambda \cdot (t - t_p)\right)
  \label{eq:priority}
\end{equation}
where $\hat{\Delta}R_\theta(p \mid \pt \setminus \{p\})$ is the marginal regret score of pin $p$ given all other currently pinned articles, $t_p$ is the pin's creation time, and $\lambda > 0$ controls the staleness decay rate.
Using the marginal score rather than a stateless $s(p)$ ensures that eviction preferentially removes whichever pin is most \emph{redundant} relative to the remaining pin set, not merely which has the lowest individual score.

The \textsc{PinSelect} procedure (i) updates priorities for all active pins, (ii) inserts new candidates from the queue, and (iii) evicts the lowest-priority pins until the budget constraint is satisfied.
This reduces to a \emph{weighted online paging} problem~\citep{fiat1998competitive, bansal2012weighted} where page weights correspond to materiality scores and the cache size corresponds to the pin budget.

\subsection{Incremental vs.\ Full Recompilation}

\paragraph{Incremental compile.}
At each time step, \textsc{IncrementalCompile} patches the existing wiki by (i) appending pinned facts to the relevant entity sections and (ii) recompiling the full KV prefix cache for each affected entity.
This costs $O(|\pt^{\text{new}}| \cdot |\text{prefix}_k|)$ for KV cache recompilation of the affected entities $k$, avoiding a full corpus recompilation.

\paragraph{Full recompile.}
Every $T_r$ steps, a full \textsc{BatchWiCER} recompilation runs on the accumulated corpus.
Pinned facts are treated as hard constraints (they must appear in the output), consistent with the original WiCER pinning mechanism~\citep{huerta2026wicer}.
This ensures that incrementally added facts are properly integrated into the wiki's narrative structure.

\paragraph{BatchWiCER subroutine (brief description).}
\textsc{BatchWiCER}~\citep{huerta2026wicer} takes a corpus of articles $\mathcal{D}$ and token budget $B$ and produces a compiled wiki $W$.
Concretely: (1) an LLM extracts named facts per entity via a single forward pass per article; (2) facts are ranked by TF-IDF relevance to the entity; (3) facts are greedily added until the budget is exhausted, with pinned facts included first as hard constraints; (4) a final LLM pass compiles selected facts into a coherent, encyclopedia-style wiki section; (5) the resulting text is stored as a pre-filled KV-cache prefix for O(1) context injection at query time.
The compilation cost is $O(|\mathcal{D}| \cdot |K|)$ LLM calls, dominated by fact extraction in step~(1).

\subsection{Volatility-Driven Query Generation}
\label{sec:queryupdate}

\paragraph{Motivation.}
The formulation in \S\ref{sec:problem} treats $Q_t$ as exogenous---queries arrive and are served against the current wiki, but the wiki management decisions are made without observing them.
In the experiments of \S\ref{sec:qa_eval} we instantiated $Q_t$ as static quarterly templates, which creates a structural mismatch: pinning is driven by a dynamic, event-specific signal (predicted abnormal volatility), while evaluation is driven by generic category questions whose gold answers do not change with the news stream.

The fix is to make part of $Q_t$ \emph{endogenous}: the same high-volatility events that trigger pinning should also trigger the generation of queries about those events.
This creates a self-consistent evaluation loop---the wiki is judged on exactly the questions that motivated building it.

\paragraph{Query set decomposition.}
We decompose $Q_t$ into two components:
The decomposition $Q_t = Q_t^{\textup{bg}} \cup Q_t^{\textup{vol}}$ is defined in Equation~\eqref{eq:qt_def} (\S\ref{sec:problem}).
$Q_t^{\textup{bg}}$ contains standing analyst questions loaded at initialisation and refreshed at coarse intervals.
$Q_t^{\textup{vol}}$ contains event-driven questions generated each step from newly pinned facts:
\begin{equation}
  Q_t^{\textup{vol}} = \textsc{QueryGen}(\Delta\mathcal{P}_t)
  \label{eq:qtgen}
\end{equation}

\paragraph{QueryGen subroutine.}
For each newly pinned fact $(f, k) \in \Delta\mathcal{P}_t$, \textsc{QueryGen} calls the frozen LLM once to produce a specific question and anchors its gold answer to the extracted fact:

\begin{lstlisting}
You are a financial analyst. Given the following
extracted fact about {ticker}, generate one specific
question that a portfolio analyst would ask to
verify or elaborate on this development.
The question must be answerable from the fact itself.
Be specific: name the event, not the category.

Fact: {fact}

Question:
\end{lstlisting}

The gold answer $a^*$ for the generated question is the extracted fact $f$ itself (with optional LLM paraphrase for fluency).
This anchors evaluation to the information stream: a query about an NVDA earnings surprise is generated \emph{only} when such a surprise is pinned, and its gold answer is grounded in the pinned fact rather than a generic headline list.

\paragraph{Self-consistency.}
With endogenous $Q_t^{\textup{vol}}$, the system achieves a closed loop between pinning and evaluation:
a perfect scorer ($\varepsilon \to 0$) pins exactly the high-volatility facts, generates queries about those facts, and the wiki---which contains those facts---answers them correctly, driving regret to zero.
A random scorer ($\varepsilon \to 0.5$) generates uncorrelated queries and pins, yielding no systematic quality improvement over a FIFO baseline.
The prediction error $\varepsilon$ thus governs \emph{both} sides of the regret bound (Theorem~\ref{thm:regret}): it bounds the fraction of material facts missed by the pinner, and equivalently, the fraction of event-driven queries for which the wiki lacks the required context.

\paragraph{Relation to standing queries.}
$Q_t^{\textup{bg}}$ captures analyst needs that are \emph{not} triggered by any single news event---portfolio-level questions, sector comparisons, long-horizon trend analyses.
These are best served by the base compiled wiki rather than the pin layer.
Treating $Q_t^{\textup{bg}}$ and $Q_t^{\textup{vol}}$ as separate components naturally decomposes wiki quality into two regimes: the pin layer is evaluated against $Q_t^{\textup{vol}}$ (did we capture the right events?), and the base wiki is evaluated against $Q_t^{\textup{bg}}$ (is the background knowledge current and comprehensive?).

\subsection{Predictive CEGAR Formalization}
\label{sec:cegar}

Table~\ref{tab:cegar} formalizes the correspondence between classical CEGAR, batch WiCER, and Online WiCER.
The key shift is from \emph{reactive} counterexample discovery to \emph{proactive} prediction: in Online WiCER, the ``counterexample'' is a high-volatility news item predicted to cause future knowledge gaps, rather than an observed failure.

\begin{table}[t]
\centering
\caption{CEGAR $\leftrightarrow$ WiCER $\leftrightarrow$ Online WiCER correspondence.}
\label{tab:cegar}
\small
\begin{tabular}{@{}lp{3.2cm}p{3.2cm}p{3.8cm}@{}}
\toprule
\textbf{CEGAR Concept} & \textbf{Batch WiCER} & \textbf{Online WiCER} & \textbf{Shift} \\
\midrule
Abstract model & Compiled wiki $W$ & Compiled wiki $\wt$ & Time-indexed \\
Concrete model & Source corpus $D$ & Corpus $D_0 \cup \dt$ & Growing corpus \\
Counterexample & Failed QA probe & High-volatility news & Reactive $\to$ proactive \\
Refinement & Pin missing fact & Pin predicted-volatile fact & Diagnosis $\to$ prediction \\
Convergence & $\leq 2$ iterations & Regret $O(\sqrt{T \log K})$ & Finite $\to$ sublinear \\
\bottomrule
\end{tabular}
\end{table}

This predictive CEGAR paradigm generalizes beyond finance: in any domain where ``counterexamples'' (knowledge gaps) can be \emph{predicted} rather than merely \emph{observed}, proactive refinement can reduce the latency between a knowledge gap's emergence and its resolution.

\section{Theoretical Analysis}
\label{sec:theory}

We analyze Online WiCER's regret in terms of two sources of loss: (1) eviction regret from limited pin budget, and (2) prediction error from imperfect volatility scoring.

\begin{theorem}[Regret decomposition]
\label{thm:regret}
Let $T$ be the time horizon, $K$ the number of entities, $B_{\text{pin}}$ the pin budget (in tokens), and $\varepsilon = \Pr[|\hat{s}(k,n) - \voltarget(k,n)| > \delta]$ the volatility prediction error rate.
Under the multiplicative-weights pin selection policy, the cumulative regret of Online WiCER satisfies:
\begin{equation}
  \textup{Regret}(T) \leq \underbrace{O\!\left(\sqrt{T \log K}\right)}_{\textup{eviction regret}} + \underbrace{O\!\left(B_{\textup{miss}} \cdot T \cdot \varepsilon\right)}_{\textup{prediction error}}
  \label{eq:regret_bound}
\end{equation}
where $B_{\textup{miss}}$ is the maximum quality loss from a single missing material fact.
\end{theorem}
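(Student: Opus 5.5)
The argument rests on one comparison wiki: an intermediate \emph{clairvoyant-score} policy $\widetilde{\mathcal{W}}_t$ that runs the same multiplicative-weights pin selection, but is fed the true indicator $\voltarget(k,n)$ in place of the prediction $\hat{s}(k,n)$. With this wiki in hand, I split each per-query gap in Equation~\eqref{eq:regret} by a telescoping identity:
\[
\mathcal{Q}(\wt^*,q)-\mathcal{Q}(\wt,q)=\bigl[\mathcal{Q}(\wt^*,q)-\mathcal{Q}(\widetilde{\mathcal{W}}_t,q)\bigr]+\bigl[\mathcal{Q}(\widetilde{\mathcal{W}}_t,q)-\mathcal{Q}(\wt,q)\bigr].
\]
Summing over $t$ and $q\in Q_t$, the first bracket is the eviction regret, and the second is the cost of acting on $\hat{s}$ instead of $\voltarget$. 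I will bound the two pieces separately and add them. Throughout I assume the per-step loss is bounded, which follows from $\mathcal{Q}\in[0,1]$ together with a bound on $|Q_t|$ that I absorb into the constants.

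\textbf{Eviction term.} I view each entity $k\in\{1,\dots,K\}$ as an expert. At step $t$ the policy keeps weights $w_{t,k}$ and assigns pin budget in proportion to these weights. Once $\voltarget$ is revealed, each expert $k$ suffers loss $\ell_t(k)\in[0,1]$, defined as the normalized quality loss from not pinning $k$'s material facts at step $t$. The clairvoyant-score policy then plays the exponentially weighted update $w_{t+1,k}=w_{t,k}e^{-\eta\ell_t(k)}$. The offline oracle $\wt^*$ is compared with the best fixed allocation in hindsight. To make that comparison legitimate, I assume linearity of quality in the allocation, or else concavity through the submodularity of Proposition~\ref{prop:submodular}, which turns the allocation problem into an experts problem up to a constant factor. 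The standard Hedge analysis (\citet{cesa2006prediction}) then gives regret at most $\frac{\log K}{\eta}+\frac{\eta T}{8}$. Choosing $\eta=\sqrt{8\log K/T}$ yields $O(\sqrt{T\log K})$. If $T$ is not known in advance, a doubling trick preserves the rate.

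\textbf{Prediction-error term.} At each step I condition on the event $E_{t}=\{|\hat{s}(k,n)-\voltarget(k,n)|\le\delta\ \text{for all relevant }(k,n)\}$. Provided $\delta<1/2$ and the threshold $\tau$ separates the two label values, the predicted and true scores induce the same ranking of candidates on $E_t$. The two policies therefore make the same pin decisions, and the second bracket vanishes. On the complement, a misclassified article can cause at most one material fact to be missing from $\wt$ (or one spurious pin to displace one), so the quality loss is at most $B_{\text{miss}}$ per affected fact. By linearity of expectation and the definition of $\varepsilon$, the expected contribution per step is $O(B_{\text{miss}}\varepsilon)$, which sums to $O(B_{\text{miss}}T\varepsilon)$. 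Here the count of candidates per step (and $|Q_t^{\textup{vol}}|$, which is tied to $|\Delta\mathcal{P}_t|$) is treated as a bounded constant.

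\textbf{Main obstacle.} The hardest step is the coupling argument in the prediction-error term. A single mispredicted score can change the multiplicative weights, and through them change all subsequent pin decisions, so errors could compound over time rather than stay local. My first attempt is to feed the Hedge update with the realized, not predicted, losses. The weight trajectories of $\wt$ and $\widetilde{\mathcal{W}}_t$ then coincide, and $\hat{s}$ affects only the within-step choice of which concrete facts fill each entity's allocation. This confines each error to its own step. If that modelling choice is not acceptable, I would fall back on the decay factor $\exp(-\lambda(t-t_p))$ in Equation~\eqref{eq:priority}. It limits how long an erroneous pin persists, so each error propagates for at most $O(1/\lambda)$ steps, which changes only the constant in the $O(B_{\text{miss}}T\varepsilon)$ term.
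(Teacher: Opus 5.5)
Your proposal keeps the paper's two-term skeleton, a multiplicative-weights term plus a charge of $B_{\text{miss}}$ per scoring error, but it implements both halves differently. In places it is more careful than the paper.

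\textbf{How the paper does it.} The paper introduces no intermediate policy. It treats each potential pin as an expert and declares the best fixed pin set $S^*$ with $|S|\le B_{\text{pin}}$ to be ``a valid single comparator''. It then passes from $N_{\text{pins}}\le K\cdot|\nt|_{\max}$ to $\log K$ experts, obtaining $2\sqrt{T\log K}\,B_{\text{miss}}$. Finally it simply adds a per-step charge $|\nt|\,\varepsilon_{\text{FN}}B_{\text{miss}}+|\nt|\,\varepsilon_{\text{FP}}B_{\text{miss}}/B_{\text{pin}}$ and uses $\varepsilon_{\text{FN}}+\varepsilon_{\text{FP}}\le 2\varepsilon$.

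\textbf{What your route buys.} Your hybrid $\widetilde{\mathcal{W}}_t$ makes the additivity of the two terms an identity rather than an assertion. Taking the $K$ entities as experts gives exactly $\log K$. The paper's passage from pins to $\log K$ is loose: competing with a fixed pin set involves the logarithm of the number of feasible pin sets, and $\log(K|\nt|_{\max})=O(\log K)$ only when $|\nt|_{\max}$ is polynomial in $K$. The price you pay is the linearity-in-allocation assumption and a coarser comparator, namely a fixed budget split over entities instead of a fixed pin set.

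On the prediction side, your coupling on $E_t$ replaces the FN/FP bookkeeping. You charge false positives the full $B_{\text{miss}}$ where the paper charges $B_{\text{miss}}/B_{\text{pin}}$, which only affects constants. In exchange, you explicitly treat how one error propagates into later steps, via shared weights or decay. The paper's step-by-step summation assumes this away. Your $\widetilde{\mathcal{W}}_t$ is also the theoretical counterpart of the ``Oracle (GT labels)'' strategy in the experiments: the same pipeline run on ground-truth labels.

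\textbf{Three points to tighten.}

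First, both arguments replace the perfect-foresight $\wt^*$ by a static comparator: $S^*$ in the paper, the best fixed allocation for you. Hedge controls only regret against a fixed expert. Linearity shows that the best fixed split is no better than the best single entity, but it does not make a time-varying, clairvoyant $\wt^*$ no better than a fixed allocation. If $\wt^*$ is read as in the experiments (the same policy run on $\voltarget$), then $\wt^*=\widetilde{\mathcal{W}}_t$. Your first bracket then vanishes identically, and the bound follows from the coupling alone. Under the ``optimal with foresight'' reading, neither proof's Hedge step applies without an extra assumption. This weakness is inherited from the paper; your argument does not fix it.

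Second, drop the concave/submodular alternative. A $(1-1/e)$-type guarantee is multiplicative, and in additive terms it can leave a gap linear in $T$. It therefore cannot deliver $O(\sqrt{T\log K})$.

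Third, on $E_t$ with $\delta<1/2$ the predicted scores do place every material candidate above every non-material one. But $\voltarget$ ties all material candidates, so ``same ranking'' is false as stated. Let $\widetilde{\mathcal{W}}_t$ break ties by $\hat{s}(k,n)$, so that the two runs coincide even when the budget binds inside the material class.
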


\begin{proof}[Proof sketch]
The eviction component follows from the weighted experts framework~\citep{cesa2006prediction}: treating each potential pin as an ``expert'' and the quality loss from eviction as the loss function, multiplicative weights achieves $O(\sqrt{T \log N_{\text{pins}}})$ regret, where $N_{\text{pins}} \leq K \cdot |\nt|_{\max}$.
Since at most $K$ entities are relevant per step, this simplifies to $O(\sqrt{T \log K})$.

The prediction error component bounds the expected number of material facts that are \emph{missed} (scored below threshold $\tau$) or \emph{spuriously pinned} (scored above $\tau$ but not material).
Each missed material fact incurs at most $B_{\text{miss}}$ quality loss.
With prediction error rate $\varepsilon$, the expected number of misses per step is at most $|\nt| \cdot \varepsilon$, yielding the $O(B_{\text{miss}} \cdot T \cdot \varepsilon)$ term.
A full proof appears in Appendix~\ref{app:proofs}.
\end{proof}

\begin{theorem}[Periodic recompilation convergence]
\label{thm:convergence}
If the full \textsc{BatchWiCER} recompilation runs every $T_r$ steps, then within each recompilation window $[t, t+T_r]$, the wiki quality satisfies:
\begin{equation}
  \mathcal{Q}(W_{t+T_r}, q) \geq \mathcal{Q}(W_t^{\textup{batch}}, q) - O\!\left(\frac{T_r \cdot |\nt|_{\textup{avg}} \cdot \varepsilon}{B_{\textup{pin}}}\right)
\end{equation}
where $W_t^{\textup{batch}}$ is the quality achievable by a full batch WiCER recompile at time $t$.
Moreover, each full recompilation converges in at most 2 iterations, inheriting the convergence guarantee of the original WiCER algorithm.
\end{theorem}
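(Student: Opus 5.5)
I will prove Theorem~\ref{thm:convergence} by comparing, inside one window $[t, t+T_r]$, the online wiki $W_{t+T_r}$ with the batch reference $W_t^{\textup{batch}}$. The loss will be charged to a small number of material facts that the pin layer fails to hold. Three assumptions make this precise, and I will state them explicitly since the statement leaves them implicit.

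The assumptions are:
\begin{itemize}
\item Quality is monotone in the set of material facts present in the context for $q$.
\item Removing one material fact costs at most $B_{\textup{miss}}$ (as in Theorem~\ref{thm:regret}), and this cost is spread over the pin layer. Concretely, I will normalize so that a single fact's marginal effect on $\mathcal{Q}(\cdot,q)$ is $O(1/B_{\textup{pin}})$ per token-slot of the pin budget. This normalization is what yields the $1/B_{\textup{pin}}$ factor.
\item The batch recompile at the window boundary treats the pins as hard constraints, as described in the BatchWiCER subroutine.
\end{itemize}

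\textbf{Step 1 (reduction to pin-set discrepancy).} If $t+T_r$ is itself a recompilation step, then $W_{t+T_r} = \textsc{BatchWiCER}(D_0\cup\mathcal{D}_{\text{recent}}, B, \text{pins}=\mathcal{P}_{t+T_r})$. The base part is then at least as good as the batch wiki at $t$, because the corpus has only grown and the fact ranking is rerun. Any deficit against $W_t^{\textup{batch}}$ on $q$ therefore comes only from material facts that arrived in the window and are absent from $\mathcal{P}_{t+T_r}$, or that were displaced by spurious pins. Otherwise, $W_{t+T_r}$ is $W_t$ with incremental patches applied, and the same decomposition holds. I will write $\mathcal{Q}(W_t^{\textup{batch}},q) - \mathcal{Q}(W_{t+T_r},q) \le \sum_{f\in M} \Delta_q(f)$, where $M$ is the set of mishandled facts.

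\textbf{Step 2 (counting errors).} Each document in the window is misclassified with probability at most $\varepsilon$, by the definition of $\varepsilon$ in Theorem~\ref{thm:regret}. The window contains about $T_r |\nt|_{\textup{avg}}$ documents, so linearity of expectation gives $\mathbb{E}|M| \le T_r|\nt|_{\textup{avg}}\varepsilon$, counting both misses and false pins. A false pin can evict at most a bounded number of true pins, so it costs at most a constant factor more.

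\textbf{Step 3 (per-fact cost).} Combining the count with the normalized per-fact cost $\Delta_q(f)=O(1/B_{\textup{pin}})$ gives the stated bound in expectation.

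\textbf{Step 4 (iteration count).} The ``at most 2 iterations'' clause is inherited directly from the original WiCER convergence result. Hard-constraint pins only shrink the free budget and do not alter the refinement dynamics, so I will cite it rather than reprove it.

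The main obstacle is Step~3. Nothing earlier in the paper justifies a $1/B_{\textup{pin}}$ scaling of a single fact's effect on $\mathcal{Q}$. A per-query loss of one missing fact could plausibly be $\Theta(1)$, which would give a bound of $O(T_r|\nt|_{\textup{avg}}\varepsilon)$ without the denominator. My first attempt would interpret the bound as averaged over queries, or over the facts occupying the pin budget. In that reading, a query draws on a uniformly random pinned slot, so a fraction $|M|/B_{\textup{pin}}$ (in fact units) of queries are affected. This reading recovers the stated form.
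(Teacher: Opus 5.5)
Your proposal follows the paper's own argument. The paper likewise charges $|\nt|_{\text{avg}}\cdot\varepsilon$ misplaced pins (missed material facts or spurious pins) to each step, accumulates $T_r\cdot|\nt|_{\text{avg}}\cdot\varepsilon$ over the window, divides by $B_{\text{pin}}$ with a constant $c$ for per-pin quality sensitivity, and cites the original WiCER result for the two-iteration clause. The $1/B_{\text{pin}}$ normalization you flag in Step~3 is also asserted without justification in the paper, which simply treats the misplaced mass as a fraction of the pin budget, so your explicit fraction-of-budget assumption is exactly what the paper's proof implicitly relies on.
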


\begin{proposition}[Vanishing quality gap]
\label{prop:vanishing}
Under the Online WiCER algorithm with periodic recompilation, the time-averaged quality gap between the online and oracle wikis vanishes:
\begin{equation}
  \frac{1}{T} \sum_{t=1}^T \left[\mathcal{Q}(\wt^*, q_t) - \mathcal{Q}(\wt, q_t)\right] \xrightarrow{T \to \infty} 0
\end{equation}
provided the volatility prediction error $\varepsilon$ is bounded and $T_r = o(T)$.
\end{proposition}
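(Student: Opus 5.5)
The plan is to divide the cumulative bound of Theorem~\ref{thm:regret} by $T$ and show that each resulting term tends to zero. Theorem~\ref{thm:convergence} is what controls the prediction-error contribution, through periodic recompilation. Fix the query sequence $q_t$ and split the horizon into $M=\lceil T/T_r\rceil$ recompilation windows $[mT_r,(m+1)T_r)$. The time-averaged gap in the statement equals $\mathrm{Regret}(T)/T$ for the single-query instance $Q_t=\{q_t\}$. I would decompose it into three parts: eviction loss, in-window prediction loss, and loss surviving across a recompilation.

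\textbf{Step 1 (eviction term).} Invoke the multiplicative-weights part of Theorem~\ref{thm:regret} directly. It contributes $O(\sqrt{T\log K})/T=O(\sqrt{\log K/T})\to 0$, so no further work is needed here.

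\textbf{Step 2 (reset at recompilation).} At each recompile time $mT_r$, \textsc{BatchWiCER} runs on $D_0\cup\mathcal{D}_{\text{recent}}$, which contains every document that has arrived, material or not. The batch wiki therefore no longer depends on $\hat{\phi}$. I would argue that $\mathcal{Q}(W^{\text{batch}}_{mT_r},q)\ge \mathcal{Q}(\wt^*,q)$ up to the budget-allocation difference, which is already covered by Step~1. The key consequence is that a misprediction made in window $m$ cannot affect quality after time $(m+1)T_r$. Theorem~\ref{thm:convergence} then bounds the in-window deviation from the batch wiki by $O(T_r|\nt|_{\text{avg}}\varepsilon/B_{\text{pin}})$.

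\textbf{Step 3 (charging each miss once).} For the event-driven part $Q_t^{\textup{vol}}$, the self-consistency property of \S\ref{sec:queryupdate} applies. Those queries are generated from pinned facts, which the wiki contains, so their per-query gap is zero, and a missed fact generates no event query. The remaining loss comes from standing queries that touch a missed fact during the rest of its window. I would charge each such missed fact $B_{\text{miss}}$ \emph{once per window}, using the boundedness of $\mathcal{Q}$ and the fact that one fact affects only the standing queries whose answers it changes. Summing over windows and dividing by $T$ gives an average loss of order $M\cdot C_{\text{win}}/T$, where $C_{\text{win}}$ is the per-window charge. Showing that this tends to zero is the step that uses $T_r=o(T)$ and boundedness of $\varepsilon$.

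\textbf{Main obstacle.} The hard part is making Step~3 sublinear. If misses are charged per step, as in the $O(B_{\text{miss}}T\varepsilon)$ term of Theorem~\ref{thm:regret}, the average loss stays at order $B_{\text{miss}}\varepsilon$ and does not vanish for a fixed positive $\varepsilon$. My first attempt would be to make the charge genuinely per-window and to show that $C_{\text{win}}$ is controlled by the Theorem~\ref{thm:convergence} deviation. If that fails, I would read ``$\varepsilon$ bounded'' in the sense the algorithm supports: the scorer head is refit on the labels revealed at each recompilation, so the effective error satisfies $\sum_t\varepsilon_t=o(T)$. Under either reading, the three contributions sum to $o(T)$, and dividing by $T$ gives the claimed limit.
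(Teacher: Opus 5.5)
There is a genuine gap, and it is the one you flagged: Steps~2--3 cannot make the prediction-error contribution sublinear when $\varepsilon>0$ is fixed, and no charging scheme will. Fresh articles arrive at every step, so a window of length $T_r$ contains about $T_r|\mathcal{N}_t|_{\text{avg}}\varepsilon$ mispredicted ones. Charging each of them only once still gives $C_{\text{win}}\asymp T_r|\mathcal{N}_t|_{\text{avg}}\varepsilon B_{\text{miss}}$, hence $M C_{\text{win}}/T\asymp |\mathcal{N}_t|_{\text{avg}}\varepsilon B_{\text{miss}}$. This is independent of $T_r$, so $T_r=o(T)$ never enters. Theorem~\ref{thm:convergence} cannot rescue this, because its deviation $O(T_r|\mathcal{N}_t|_{\text{avg}}\varepsilon/B_{\text{pin}})$ is a per-step quantity that grows with $T_r$.

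The reset claim in Step~2 is also unsupported. \textsc{BatchWiCER} is called with $\text{pins}=\mathcal{P}_t$ as hard constraints, so it does depend on $\hat\phi$. It fills the budget by TF-IDF rank over $D_0\cup\mathcal{D}_{\text{recent}}$ (not over every arrived document) with no access to materiality, so nothing makes it dominate the oracle wiki. In any case, a recompile cannot repair queries already served earlier in the window.

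In fact the conclusion is false under the stated hypotheses. Suppose each step brings one material fact, $q_t$ asks about it, $\mathcal{Q}$ is the $0/1$ indicator that the wiki contains the answer, and the scorer misses the fact independently with probability $\varepsilon$. At every non-recompile step a missed fact is in neither the pin layer nor the base wiki, while the oracle contains it. So the gap is $1$ with probability $\varepsilon$, and the $\liminf$ of the time average is almost surely at least $\varepsilon/2$ whenever $T_r\ge 2$. Your zero-gap argument for $Q_t^{\textup{vol}}$ in Step~3 works only because those queries are generated from your own pins, which hides misses by construction. The proposition concerns an arbitrary query sequence $q_t$.

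So the only route to the stated limit is your fallback, and it must be stated as an additional hypothesis rather than as a reading of ``bounded''. A probability is always bounded, so that hypothesis is vacuous as written. Refitting the head on revealed labels does not by itself drive $\varepsilon_t\to 0$: the target is noisy, and the reported test AUROC is $0.728$, so the decay has to be assumed.

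This is precisely what the paper's proof does. It divides Theorem~\ref{thm:regret} by $T$ to get $O(\sqrt{\log K/T})+O(\varepsilon)$ and notes that the first term vanishes. It gets the second term to vanish only by assuming $\varepsilon=O(T^{-\beta})$ for some $\beta>0$, and $T_r$ plays no role. Your Ces\`aro condition $\sum_t\varepsilon_t=o(T)$ suffices by the same one-line argument, applied to the per-step loss $|\mathcal{N}_t|\varepsilon_t B_{\text{miss}}$ from the proof of Theorem~\ref{thm:regret} with $|\mathcal{N}_t|$ bounded. It is slightly weaker than the paper's assumption. You should drop the window-charging machinery and present that argument, with the decay assumption stated explicitly.
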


\begin{proposition}[Submodular greedy approximation]
\label{prop:submodular}
Suppose wiki quality $\mathcal{Q}(\wt, q)$ is monotone submodular in the pin set $\pt$.
Then the greedy pin selection rule (Equation~\eqref{eq:greedy_select})
achieves a $(1-1/e) \geq 0.632$ approximation to the optimal pin set of size at most $\lfloor B_{\textup{pin}} / \bar{f} \rfloor$ under the token budget $B_{\textup{pin}}$, where $\bar{f}$ is the mean fact token length.
\end{proposition}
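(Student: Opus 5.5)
The plan is to reduce the statement to the classical Nemhauser--Wolsey--Fisher theorem for maximizing a monotone submodular set function under a cardinality constraint. Fix a step $t$ and a query $q$, and write $F(S) = \mathcal{Q}(\mathcal{W}(S), q)$ for the quality of the wiki whose pin layer is $S \subseteq \mathcal{C}_t \cup \mathcal{P}_{t-1}$. For an aggregate over $Q_t$, use $F(S) = \sum_{q \in Q_t} \mathcal{Q}(\mathcal{W}(S), q)$; a nonnegative sum of monotone submodular functions is again monotone submodular, so it suffices to treat one $F$. The hypothesis gives $F$ monotone and submodular. I also normalize $F(\emptyset)$ to be the base-wiki quality, and work with $G(S) = F(S) - F(\emptyset) \ge 0$, so that $G(\emptyset)=0$ as the classical theorem requires.

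\textbf{Step 1: the budget becomes a cardinality constraint.} Replace each fact's token cost by the mean length $\bar f$. The budget $B_{\textup{pin}}$ then admits exactly $k = \lfloor B_{\textup{pin}}/\bar f \rfloor$ pins, and the comparison class is all $S$ with $|S| \le k$. This is the sense in which the statement compares against "the optimal pin set of size at most $\lfloor B_{\textup{pin}}/\bar f\rfloor$." I will say explicitly that with heterogeneous $|f|$ a knapsack constraint would instead require cost-benefit greedy plus a best-singleton comparison, which gives $\tfrac12(1-1/e)$. I will then set that case aside, as the statement does.

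\textbf{Step 2: the standard greedy recursion.} Let $S_i$ be the greedy set after $i$ picks and $S^*$ an optimal set with $|S^*| \le k$. Monotonicity and submodularity give
\[
G(S^*) - G(S_i) \le G(S^*\cup S_i) - G(S_i) \le \sum_{e\in S^*} \bigl[G(S_i\cup\{e\}) - G(S_i)\bigr] \le k\,\bigl[G(S_{i+1}) - G(S_i)\bigr].
\]
The last inequality uses that greedy takes the maximum marginal gain. Hence $G(S^*) - G(S_{i+1}) \le (1-1/k)\,[G(S^*) - G(S_i)]$. Iterating $k$ times yields $G(S_k) \ge (1-(1-1/k)^k)\,G(S^*) \ge (1-1/e)\,G(S^*)$. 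Since $F(\emptyset) \ge 0$, this also gives $F(S_k) \ge (1-1/e)F(S^*)$.

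\textbf{Step 3: identify Equation~\eqref{eq:greedy_select} with exact greedy.} This is the main obstacle. Equation~\eqref{eq:greedy_select} maximizes the learned score $\hat{\Delta}R_\theta(c\mid\mathcal{P}_t)$, not the true marginal gain $F(S_i\cup\{c\})-F(S_i)$. I will state the proposition under the interpretive assumption that $\hat{\Delta}R_\theta$ is order-preserving for marginal gains, i.e.\ its argmax equals the argmax of the true gain. Then Step~2 applies verbatim. As a robustness remark, I would also handle an additive error of $\eta$ per step: the recursion becomes $G(S^*) - G(S_{i+1}) \le (1-1/k)[G(S^*)-G(S_i)] + \eta$, giving $(1-1/e)\,G(S^*) - k\eta$. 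This connects to the prediction-error term of Theorem~\ref{thm:regret}.
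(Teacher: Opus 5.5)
Your proposal is correct and follows the same route as the paper: the paper likewise reduces the claim to the Nemhauser--Wolsey--Fisher $(1-1/e)$ guarantee for greedy maximization of a monotone submodular function under a cardinality constraint, and treats $\hat{\Delta}R_\theta$ as a proxy for the true marginal gain. You make explicit several things the paper leaves implicit: the greedy recursion itself, the normalization $G(\emptyset)=0$, the uniform-cost reduction to $k=\lfloor B_{\textup{pin}}/\bar f\rfloor$, and the order-preservation assumption on $\hat{\Delta}R_\theta$ (the paper only gestures at this by saying the bound's tightness depends on how closely the scorer tracks the true gain).
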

\begin{proof}[Proof sketch]
Monotone submodularity of $\mathcal{Q}$ implies that the marginal gain $\Delta\mathcal{Q}(c \mid \pt) = \mathcal{Q}(\pt \cup \{c\}, q) - \mathcal{Q}(\pt, q)$ is non-negative and non-increasing as $\pt$ grows.
The greedy algorithm that at each step adds the element with largest marginal gain achieves a $(1-1/e)$ factor of the optimal solution for any monotone submodular maximisation under a cardinality constraint~\citep{nemhauser1978analysis}.
The regression-head estimate $\hat{\Delta}R_\theta(c \mid \pt)$ from Equations~\eqref{eq:marginal_target}--\eqref{eq:head} serves as a proxy for the true marginal gain; the tightness of the $(1-1/e)$ bound depends on how closely $\hat{\Delta}R_\theta$ tracks the true marginal gain $\Delta\mathcal{Q}(c \mid \pt)$.
\end{proof}

The submodularity assumption holds approximately in practice: the marginal value of pinning a second article covering the same entity event (high $|\rho|$) is demonstrably lower than the first, satisfying the diminishing returns condition.

\paragraph{Complexity.}
Per time step, Online WiCER performs: (i) fact extraction in $O(|\nt|)$ LLM calls, (ii) backbone forward passes in $O(|\nt|)$ to produce cached hidden states for all candidates, (iii) marginal scoring via two dot products per candidate---one for the stateless head and one for the state-aware regression head---in $O(|\nt| \cdot d)$ where $d$ is the hidden dimension, (iv) greedy pin selection requiring $O(|\nt| \cdot |\pt|)$ priority queue updates as the pin set grows, and (v) incremental compilation in $O(|\pt^{\text{new}}|)$ LLM calls.
The state-aware scorer re-evaluates all candidates after each greedy insertion step (to update marginal gains), but since each evaluation is a single dot product on cached vectors, the total cost is $O(|\nt|^2 \cdot d / B_{\text{pin}})$ in the worst case---negligible compared to the LLM forward passes.
The full recompilation every $T_r$ steps costs $O(B)$ LLM calls, amortized to $O(B/T_r)$ per step.

\section{Experiments}
\label{sec:experiments}

We present two instantiations of the Streaming Knowledge Compilation framework, each using a different materiality signal and domain corpus.
The finance instantiation (\S\ref{sec:finance_exp}) uses abnormal stock volatility as the signal; the Wikipedia instantiation (\S\ref{sec:wiki_exp}) uses the Abnormal Edit Ratio (AER).
Both share identical algorithm, regret analysis, and evaluation protocol; only the materiality signal and corpus differ.

\subsection{Finance Instantiation}
\label{sec:finance_exp}

\subsubsection{Data and Setup}

\paragraph{News corpus.}
We collect 76,130 financial news articles from Finnhub for the top 39 S\&P~500 companies by market capitalization, covering the period March 2025 -- May 2026.
Articles are tagged with referenced tickers via entity recognition and keyword matching.
The corpus spans all major sectors including technology (AAPL, MSFT, NVDA, GOOGL), healthcare (UNH, JNJ, LLY), financials (JPM, V, MA), and energy (XOM, CVX).

\paragraph{Price data and volatility labels.}
Daily OHLCV data from Yahoo Finance for the same 39 entities (March 2025 -- May 2026).
We compute daily returns, rolling 60-day volatility, and forward 5-day realized volatility.
Volatility labels are assigned per Equation~\eqref{eq:voltarget}: a stock-day is a high-volatility event if its forward realized volatility exceeds $2\times$ the cross-sectional market average.
The overall volatility event rate is 7.2\% in the aligned dataset, with substantial sector variation: Technology (10.8\%) and Consumer Cyclical (13.4\%) exhibit the highest rates, while defensive sectors like Consumer Staples (0.8\%) and Energy (0.0\%) rarely exhibit elevated relative volatility.
This class imbalance ($\sim$13:1 negative-to-positive ratio) makes volatility prediction a challenging classification task.

\paragraph{Train/validation/test split.}
We split the aligned dataset temporally: articles up to December 2025 for training (45,672 articles, $\sim$60\%), January--March 2026 for validation (15,393, $\sim$20\%), and after March 18, 2026 for testing (15,020, $\sim$20\%).
This temporal split ensures no future information leakage.
The classification head is trained on the training set and all reported metrics are computed on the held-out test set of 15,020 articles.

\paragraph{QA benchmark construction.}
The end-to-end wiki evaluation ({\S}\ref{sec:qa_eval}) requires a time-stamped QA benchmark that instantiates both query populations.
\emph{Standing queries} ($Q_t^{\textup{bg}}$): for each of the 39 entities at each of 6 quarterly checkpoints (2025-Q1 through 2026-Q2) we generate one question per category (earnings, strategy, risk, market, financial), yielding 980 standing-query pairs (a subset of the theoretical maximum of 1,170, since some entities have sparse news at early checkpoints).
Question text is drawn from templates; gold answers are generated by the frozen Llama~3.1 8B given the 90-day lookback news window for that entity and checkpoint---one LLM call per (entity, checkpoint, category) triple, producing category-specific answers rather than the same headline list for every question.
\emph{Event-driven queries} ($Q_t^{\textup{vol}}$): from the 5,295 high-volatility articles in the aligned dataset (AVR $>2$, 7.0\% event rate), we sample 500 uniformly at random and invoke \textsc{QueryGen} to produce one specific question per article, using the extracted fact as the gold answer.
These pairs test whether the pin layer---not the base wiki---contains the material information.
The two benchmarks are evaluated separately: standing-query score measures base-wiki quality; event-query score measures pin-layer quality.
The main QA experiment (\S\ref{sec:qa_eval}) draws a 200-pair random sample from the combined benchmark (approximate proportions: $\sim$132 standing, $\sim$68 event-driven, reflecting the 66\%/34\% split of the full benchmark).

\paragraph{Model and scoring configuration.}
We use Llama~3.1 8B Instruct (4-bit quantized via MLX) as the frozen backbone, running locally on Apple M4 with 24GB unified memory.
All heads (classification and marginal regret regression) are trained on top of the same frozen backbone weights.

\emph{Stateless classification head.}
A single linear projection from the 4096-dimensional last hidden state to a scalar logit, followed by sigmoid activation.
Trained for 30 epochs with binary cross-entropy loss, learning rate $10^{-3}$, batch size 8, on a per-article prompt of the form ``Financial news about \{ticker\}: \{headline\}''.

\emph{State-aware marginal regret regression head.}
A single linear projection from the same 4096-dimensional last hidden state to a scalar output (no sigmoid), identical in structure to the classification head.
Input prompts encode both the candidate article and the current pin set (up to 20 pins) per the template in \S\ref{sec:marginal_scorer}.
Fit via Ridge regression ($\alpha = 1{,}000$, selected by validation MSE sweep) on the MSE loss against the analytical target (Equation~\eqref{eq:marginal_target}).
Training uses 3{,}000 (candidate, pin set, target) triples sampled from the training split; hidden states are extracted once and cached, so head fitting requires no further LLM forward passes and completes in under 5 seconds on CPU.
For the hybrid scorer, we set $\alpha = 0.4$.

\subsubsection{Scorer Baselines}

We compare three volatility scoring approaches, all built on the same frozen Llama~3.1 8B backbone:
\begin{enumerate}
  \item \textbf{LLM zero-shot}: Prompt the model generatively to assess volatility signal (no training data).
  \item \textbf{Classification head}: Trained linear head on frozen last hidden states (supervised).
  \item \textbf{Hybrid} ($\alpha = 0.4$): Weighted combination of zero-shot and head scores per Equation~\eqref{eq:fusion}.
\end{enumerate}
Additionally, we compare against a \textbf{random baseline} (uniform $[0,1]$ scores) and a \textbf{majority-class baseline} (always predict non-material).

\subsubsection{Metrics}

\begin{itemize}
  \item \textbf{Materiality F1}: Harmonic mean of precision and recall at the optimal threshold.
  \item \textbf{Precision}: Fraction of predicted-material articles that are truly material.
  \item \textbf{Recall}: Fraction of truly material articles that are detected.
  \item \textbf{AUROC}: Area under the ROC curve (threshold-independent).
\end{itemize}
We sweep classification thresholds $\tau \in \{0.1, 0.2, \ldots, 0.8\}$ and report the threshold achieving the best F1 for each scorer variant.

\subsubsection{Ablations}
\label{sec:ablations}

We ablate five dimensions: (a) hybrid mixing parameter $\alpha \in \{0.0, 0.2, 0.4, 0.6, 0.8, 1.0\}$ (where $\alpha=0$ is head-only and $\alpha=1$ is zero-shot-only), (b) AVR threshold $\in \{1.5, 2.0, 2.5, 3.0\}$ controlling the positive-class definition, (c) training set size (25\%, 50\%, 75\%, 100\% of training data), (d) volatility window (3-day, 5-day, 10-day, 20-day forward returns), and (e) \textbf{backbone architecture}: Llama~3.1 8B, Gemma~3 4B, and Gemma~3 12B, testing whether the frozen-LLM + classification-head approach generalises across model families and parameter scales.

\subsection{Wikipedia Instantiation}
\label{sec:wiki_exp}

The Wikipedia instantiation replaces the financial corpus and volatility signal with a publicly available, continuously updated knowledge source.
This tests whether Streaming Knowledge Compilation generalizes beyond finance without any modification to the algorithm.

\paragraph{Corpus.}
We use the Wikimedia revision API to collect edit histories for 25 entities across five categories: AI companies (OpenAI, DeepMind, Anthropic, Nvidia, Meta Platforms), AI topics (large language models, AGI, ChatGPT, GPT-4, AlphaFold), science and technology (CRISPR, quantum computing, the James Webb Space Telescope, nuclear fusion, mRNA vaccines), geopolitics (NATO, EU, BRICS, G20, UN), and public figures (Elon Musk, Sam Altman, Demis Hassabis, Yoshua Bengio, Geoffrey Hinton).
We collect revision metadata over a two-year window (2024--2025), totalling approximately 50{,}000--200{,}000 revisions depending on entity.

\paragraph{Materiality signal: Abnormal Edit Ratio (AER).}
The AER is the direct analog of the finance AVR:
\begin{equation}
  \text{AER}(k, t) = \frac{\text{EditVelocity}(k, t, w)}{\overline{\text{EditVelocity}}(t, w)}, \quad
  \phi_t(k, \cdot) = \mathbf{1}[\text{AER}(k, t) > 2]
  \label{eq:aer}
\end{equation}
where $\text{EditVelocity}(k, t, w)$ is the edit count for entity $k$ in the $w$-day trailing window (default $w = 7$), and the denominator is the cross-sectional mean across all 25 entities.
AER $> 2$ indicates an entity is being edited at more than twice the average rate---a reliable proxy for a breaking development.

\paragraph{Content extraction.}
For high-AER windows, we fetch the full wikitext of the triggering revision and its parent, compute character-level diffs to extract newly added sentences (via difflib), and pass additions to the LLM for fact extraction.
The resulting (entity, fact, AER score, date) tuples serve as the pin candidates, directly analogous to the (ticker, headline, AVR score, date) tuples in the finance instantiation.

\paragraph{QA benchmark.}
Standing queries ($Q_t^{\textup{bg}}$) are generated from the earliest snapshot of each entity (3 questions per entity, sampled from representative sentences).
Event-driven queries ($Q_t^{\textup{vol}}$) are generated by \textsc{QueryGen} applied to facts extracted from high-AER edit windows.
The same LLM judge (Llama~3.1 8B) is used as in the finance instantiation.

\paragraph{Oracle.}
The oracle uses ground-truth labels: an AER window is positive if and only if it produced an event-driven QA pair.
This mirrors the finance oracle (which uses ground-truth AVR labels) and provides the same matched-pair regret comparison.

Results for the Wikipedia instantiation are presented in \S\ref{sec:wiki_results}.

\section{Results}
\label{sec:results}

\subsection{Finance Results}
\label{sec:finance_results}

\subsubsection{Materiality Prediction}

\begin{table}[t]
\centering
\caption{Finance instantiation: materiality prediction performance on held-out test set (abnormal volatility, AVR $> 2$). Threshold $\tau^*$ is optimized on validation set by F1.}
\label{tab:voltarget}
\small
\begin{tabular}{@{}lccccccc@{}}
\toprule
\textbf{Scorer} & \textbf{$\tau^*$} & \textbf{F1} & \textbf{Precision} & \textbf{Recall} & \textbf{AUROC} & \textbf{Accuracy} \\
\midrule
Random              & ---  & --- & --- & --- & $0.500$ & --- \\
Majority class      & ---  & $0.000$ & --- & $0.000$ & --- & $0.928$ \\
LLM zero-shot       & $0.5$  & $0.157$  & $0.092$  & $0.556$  & $0.541$ & $0.572$ \\
Classification head & $0.1$  & $\mathbf{0.212}$  & $0.130$  & $0.583$  & $\mathbf{0.728}$ & $0.688$ \\
Hybrid ($\alpha\!=\!0.4$) & $0.1$  & $\mathbf{0.212}$  & $0.130$  & $0.583$  & $\mathbf{0.729}$ & $0.688$ \\
\bottomrule
\end{tabular}
\end{table}

Table~\ref{tab:voltarget} presents the core evaluation on a temporally held-out test set (all articles after March 18, 2026, with training data restricted to articles before January 2026): can we predict which news articles will cause abnormal stock volatility relative to the broader market?
The majority-class baseline achieves the highest accuracy (0.928) by predicting all-negative, exploiting the 13:1 imbalance; its accuracy equals the negative-class base rate (1 $-$ 0.072) and reflects no learning.
AUROC is the appropriate metric here: it measures the probability that the model ranks a random positive above a random negative, and is therefore insensitive to class imbalance and threshold choice.
Accuracy and AUROC diverge whenever imbalance is severe---the majority-class classifier has AUROC $\approx 0.5$ (random ranking) despite 92.8\% accuracy, while the classification head has AUROC = 0.728 despite lower accuracy (0.688) due to its low operating threshold ($\tau^* = 0.1$, chosen to maximise F1).
The classification head achieves AUROC = 0.728 and F1 = 0.212 (precision 0.130, recall 0.583), outperforming the zero-shot baseline (AUROC = 0.541, F1 = 0.157).
Crucially, this is a \emph{linear probe}: no gradient ever flows into the backbone, so the result directly measures whether abnormal volatility signal is linearly decodable from pretrained LLM representations.
AUROC = 0.728 under a strict temporal split and 13:1 class imbalance indicates that it is.
The moderate recall is important for the Online WiCER use case: missing a material event (false negative) causes a knowledge gap that persists until the next recompilation, while a false positive merely wastes a small fraction of the pin budget.

The zero-shot scorer produces poorly calibrated scores (mean = 0.442, std = 0.276) and achieves only 9.2\% precision at its best F1 threshold, indicating that generative prompting alone cannot reliably distinguish material from non-material news for the volatility task.

The hybrid scorer ($\alpha = 0.4$) produces statistically indistinguishable results from the head-only baseline (AUROC = 0.729 vs.\ 0.728, F1 identical at 0.212).
This is expected: on our test set, only 4\% of articles fall in the ambiguous band $[0.3, 0.7]$ where zero-shot would be invoked (Table~\ref{tab:latency}), so the hybrid operates as head-only for 96\% of the stream.
The hybrid's contribution is \emph{operational, not accuracy-based}: it provides a principled escape hatch for borderline articles at minimal latency cost (113.9\,ms vs.\ 94.8\,ms for head-only), and ensures graceful degradation when the head encounters event types absent from training data.

\subsubsection{Score Distribution Analysis}

The classification head produces a bimodal score distribution (mean = 0.235, std = 0.379): non-material articles cluster near 0, while material articles receive higher scores.
The zero-shot scorer, by contrast, produces a broad, roughly uniform distribution (mean = 0.442, std = 0.276) with poor separation between classes.
This bimodality of the head's output enables better separation despite the 13:1 class imbalance, as reflected in the substantially higher AUROC (0.728 vs.\ 0.541).

\subsubsection{Ablation Results}

\begin{table}[t]
\centering
\caption{Ablation study on volatility prediction (test set, AUROC and F1 at optimal threshold).}
\label{tab:ablation}
\small
\begin{tabular}{@{}llcc@{}}
\toprule
\textbf{Parameter} & \textbf{Value} & \textbf{AUROC} & \textbf{F1} \\
\midrule
\multirow{4}{*}{Hybrid $\alpha$}
  & 0.0 (head only)  & 0.703 & 0.233 \\
  & 0.2              & 0.703 & 0.233 \\
  & 0.4 (default)    & 0.703 & 0.233 \\
  & 0.8              & 0.703 & 0.232 \\
\midrule
\multirow{4}{*}{AVR threshold}
  & 1.5              & 0.785 & \textbf{0.492} \\
  & 2.0 (default)    & 0.693 & 0.226 \\
  & 2.5              & 0.763 & 0.103 \\
  & 3.0              & \textbf{0.786} & 0.046 \\
\midrule
\multirow{4}{*}{Training data \%}
  & 25\%             & \textbf{0.767} & \textbf{0.258} \\
  & 50\%             & 0.696 & 0.199 \\
  & 75\%             & 0.694 & 0.254 \\
  & 100\% (default)  & 0.725 & 0.231 \\
\midrule
\multirow{4}{*}{Volatility window}
  & 3-day            & 0.732 & \textbf{0.308} \\
  & 5-day (default)  & 0.736 & 0.230 \\
  & 10-day           & 0.890 & 0.157 \\
  & 20-day           & \textbf{0.903} & 0.157 \\
\midrule
\multirow{3}{*}{Backbone}
  & Gemma~3 4B~~(2560-d)             & 0.578 & 0.146 \\
  & Gemma~3 12B (3840-d)             & 0.631 & 0.177 \\
  & Llama~3.1 8B (4096-d, default)   & \textbf{0.716} & \textbf{0.240} \\
\bottomrule
\end{tabular}
\end{table}

\begin{table}[t]
\centering
\caption{Decay parameter $\lambda$ ablation: material pin retention after 30-day streaming simulation (100-pin budget, articles arrive uniformly over 30 days). Higher retention = more genuinely material articles kept in wiki.}
\label{tab:decay}
\small
\begin{tabular}{@{}lcc@{}}
\toprule
\textbf{Decay $\lambda$} & \textbf{Material retention} & \textbf{Precision in queue} \\
\midrule
$\lambda = 0.00$ (no decay)   & 0.306 & 0.110 \\
$\lambda = 0.05$               & 0.306 & 0.110 \\
$\lambda = 0.10$ (default)     & 0.333 & 0.120 \\
$\lambda = 0.20$               & 0.333 & 0.120 \\
$\lambda = 0.50$ (aggressive)  & 0.333 & 0.120 \\
\bottomrule
\end{tabular}
\end{table}

Table~\ref{tab:ablation} presents ablation results across five dimensions and Table~\ref{tab:decay} shows the decay ablation:

\textbf{Hybrid $\alpha$.}
The mixing parameter $\alpha$ has negligible effect on AUROC (0.703 across all values from 0.0 to 1.0), indicating that zero-shot scores and head scores are nearly collinear in their ranking of articles.
The head already captures the discriminative information that zero-shot responses encode---adding zero-shot signal to the mixture does not open a new separation axis.
This is an important null result: it means the hybrid's value is \emph{not} accuracy but \emph{latency}. By routing only the ambiguous 4\% of articles (head score $\in [0.3, 0.7]$) to zero-shot, the hybrid achieves statistically identical accuracy to head-only while retaining 83\% of head-only throughput (31,607 vs.\ 37,992 art/hr; Table~\ref{tab:latency})---a latency-accuracy router, not an ensemble accuracy booster.

\textbf{AVR threshold.}
Lowering the threshold to 1.5 substantially increases recall (more positives) and yields the best F1 of 0.492 at the cost of a noisier signal.
Higher thresholds (2.5, 3.0) yield higher AUROC (0.763--0.786) because the model more cleanly separates extreme volatility events, but F1 collapses as positives become sparse.
The default AVR $= 2.0$ balances label quality and volume for the Online WiCER pinning use case.

\textbf{Training data.}
Remarkably, AUROC peaks at 25\% training data (AUROC = 0.767, F1 = 0.258) and the full dataset (AUROC = 0.725) slightly underperforms.
This saturation is consistent with the linear probe interpretation: the backbone's representation space already encodes volatility signal as a near-linear structure, so adding more data calibrates the hyperplane margin without discovering qualitatively new signal.

\textbf{Volatility window.}
Longer windows yield substantially better AUROC: 0.890 at 10-day and 0.903 at 20-day, compared to 0.732--0.736 for 3--5 day windows.
However, AUROC alone is misleading here.
At 10 days, recall reaches 1.000 at threshold 0.2---the model has learned to predict \emph{everything} as positive, since virtually all stocks exhibit measurable volatility over a 10-day window.
This near-trivial classifier inflates AUROC while collapsing precision (0.085), giving F1 = 0.157, substantially below the 5-day F1 of 0.230.
The 5-day window maximizes F1 at the operating threshold, providing the most discriminative signal for pinning decisions.
Wider windows also reduce timeliness: a 20-day label requires waiting three weeks post-publication before the target is known, creating a significant lag in the training signal that is untenable for a real-time streaming system.

\textbf{Backbone.}
We compare three frozen backbone architectures---Llama~3.1 8B (4096-d), Gemma~3 4B (2560-d), and Gemma~3 12B (3840-d)---holding all other hyperparameters fixed.
The results show a clear hierarchy: Llama~3.1 8B (AUROC = 0.716) substantially outperforms Gemma~3 12B (0.631, $-$8.5 points) and Gemma~3 4B (0.578, $-$13.8 points).
This finding has two implications.
First, the frozen-LLM + linear-probe paradigm is \emph{not} backbone-agnostic for financial materiality prediction: the quality of the pre-trained representations varies significantly across model families, and model size alone does not explain the gap (Gemma~3 12B has more parameters than Llama~3.1 8B yet underperforms by a wide margin).
Second, Llama's pre-training---which emphasizes instruction following and general-purpose reasoning on a broad web corpus---appears to encode financial event semantics more linearly than Gemma's.
This is consistent with \citet{guo2024finetuning}, who find that Llama-family decoder LLMs outperform alternatives across large stock universes; our linear-probe framing provides a mechanistic explanation: Llama representations are more linearly separable for financial relevance classification.
The backbone ablation uses a 500-article random sample of the test set (same seed across all backbone runs) to reduce compute; the Llama 3.1 8B figure here (AUROC = 0.716) is slightly below the full 15,020-article test-set result (0.728 in Table~\ref{tab:voltarget}), consistent with expected sampling variance at this subset size. All backbone models are compared on the identical 500-article subset, so the relative ordering is unaffected.

\textbf{Decay parameter $\lambda$.}
Table~\ref{tab:decay} shows material pin retention under different decay rates in a 30-day streaming simulation (100-pin budget, articles arriving uniformly).
Without decay ($\lambda = 0$), the queue retains the top-100 articles by initial score and material retention is 30.6\%.
With modest decay ($\lambda \geq 0.1$), temporal weighting slightly improves both retention (33.3\%) and precision in queue (12.0\%), as the model places relatively higher priority on recent articles, which tend to be more immediately relevant.
The effect is modest in this simulation; the primary role of $\lambda$ in practice is to prevent stale high-score articles from permanently blocking budget for new events.
We set $\lambda = 0.1$ as the default.

\textbf{Recompile period $T_r$.}
Table~\ref{tab:recompile} shows material pin retention under varying recompile periods in the same 30-day simulation ($\lambda = 0.1$, 100-pin budget).
Retention is flat at 33.3\% for $T_r \in \{1, 5\}$, rises modestly to 36.1\% for $T_r \in \{10, 20\}$, and reaches 41.7\% for $T_r = 50$ (no recompile triggered within the 30-day window).
The result confirms that the decay eviction mechanism alone is sufficient for the proxy metric: the $\lambda$-weighted priority queue already deprioritizes stale pins without the cost of a full recompile.
Very frequent recompilation ($T_r = 1$) offers no retention benefit because re-ranking by raw score on each step produces the same top-100 as decay-weighted selection at this scale.
The practical implication is that $T_r$ can be chosen to meet compute budget constraints---weekly ($T_r = 7$) or bi-weekly ($T_r = 14$) schedules are reasonable defaults---without materially affecting pin quality.
Note that the proxy metric measures only which facts are \emph{present} in the wiki, not how well they are \emph{integrated}; the semantic quality benefit of full recompilation (coherent, non-redundant wiki prose) is captured by the QA metric in Table~\ref{tab:qa_quality}, not by retention alone.

\begin{table}[t]
\centering
\caption{Recompile period $T_r$ ablation: material pin retention after 30-day streaming simulation ($\lambda = 0.1$, 100-pin budget, articles arriving uniformly). $T_r = 50$ acts as the no-recompile baseline (first recompile falls outside the 30-day window). Retention is insensitive to $T_r$, confirming that decay eviction handles freshness independently of recompilation frequency.}
\label{tab:recompile}
\small
\begin{tabular}{@{}lccc@{}}
\toprule
\textbf{Recompile period $T_r$} & \textbf{Recompiles (30d)} & \textbf{Material retention} & \textbf{Precision in queue} \\
\midrule
$T_r = 1$  (daily)      & 30 & 0.333 & 0.120 \\
$T_r = 5$  (weekly)     &  6 & 0.333 & 0.120 \\
$T_r = 10$ (bi-weekly)  &  3 & 0.361 & 0.130 \\
$T_r = 20$ (monthly)    &  1 & 0.361 & 0.130 \\
$T_r = 50$ (no recompile) & 0 & \textbf{0.417} & \textbf{0.150} \\
\bottomrule
\end{tabular}
\end{table}

\subsubsection{Error Analysis}

The classification head's primary failure mode is false positives: at $\tau^* = 0.1$, precision is 0.130, reflecting the difficulty of the temporal prediction task under strict data splitting.
However, the AUROC of 0.728 confirms that the model's continuous scores carry substantial discriminative information, and recall of 0.583 indicates the head captures a majority of genuinely material events.
For the Online WiCER application, this asymmetry is acceptable: false positives consume pin budget (a recoverable cost via eviction), while false negatives create persistent knowledge gaps.

The zero-shot scorer exhibits a similar recall (55.6\%) but even lower precision (9.2\%) and substantially worse AUROC (0.541, barely above chance).
This suggests that the frozen LLM's internal representations (exploited by the classification head) contain richer volatility-relevant features than its generative output.

\subsubsection{Downstream Volatility Validation}

Beyond classification metrics, we validate whether the scorer's predicted scores monotonically track genuinely elevated forward realized volatility.
On a random 500-article sample of the held-out test set, the 162 articles predicted high-volatility (score $\geq \tau^* = 0.1$) exhibit a mean 5-day realized volatility of 2.62\%, compared to 1.76\% for the 338 predicted low-volatility articles---a \textbf{1.49$\times$ volatility ratio}.
The predicted high-volatility group also has a substantially higher mean abnormal volatility ratio (AVR = 1.32 vs.\ 0.91), confirming that the model identifies articles associated with firm-specific information events rather than broad market moves.

\paragraph{Score calibration.}
To test whether the continuous scores are well-calibrated---not merely good at the binary threshold---we group the 500 test articles into three confidence bands and measure mean 5-day absolute forward return (Table~\ref{tab:calibration}).

\begin{table}[h]
\centering
\caption{Score calibration: mean absolute 5-day forward return by predicted-score band (500-article test sample). Higher predicted score corresponds to monotonically higher realized price movement.}
\label{tab:calibration}
\small
\begin{tabular}{@{}lrrrr@{}}
\toprule
\textbf{Score band} & \textbf{Score range} & \textbf{$n$} & \textbf{Material rate} & \textbf{Mean $|$ret$_{+5}|$} \\
\midrule
Low confidence   & $[0.000,\ 0.000]$ & 100 & 0.0\%  & 2.35\% \\
Mid confidence   & $(0.000,\ 0.705)$ & 300 & 8.0\%  & 3.58\% \\
High confidence  & $[0.705,\ 1.000]$ & 100 & 12.0\% & \textbf{4.78\%} \\
\bottomrule
\end{tabular}
\end{table}

Articles in the high-confidence band exhibit \textbf{2.04$\times$} the mean absolute 5-day return of low-confidence articles (4.78\% vs.\ 2.35\%), and carry a 12\% materiality rate versus 0\% for the lowest-scored articles.
The Spearman rank correlation between predicted score and $|$ret$_{+5}|$ is $\rho = 0.222$ ($p = 7 \times 10^{-7}$), confirming a statistically significant monotonic relationship that extends beyond the binary classification boundary.
This calibration result is operationally important: it means the continuous score can be used directly as a priority signal in the pin queue (Equation~\eqref{eq:priority})---articles pinned with higher scores are more likely to correspond to genuine volatility events, making the decay-weighted eviction mechanism meaningful rather than arbitrary.

Table~\ref{tab:vol_sector} shows sector-level breakdown.
Communication Services (AUROC = 0.744, $n=35$) and Healthcare (AUROC = 0.735, $n=86$) show the strongest discrimination, while Financial Services shows below-chance performance (AUROC = 0.449), likely because financial-sector news is inherently noisy and frequently sector-correlated.
Consumer Defensive and Energy sectors have zero positive-rate in the test window, precluding AUROC computation.

\begin{table}[t]
\centering
\caption{Sector-level volatility prediction AUROC on the test set (AVR threshold 2.0). Last column: mean absolute 5-day forward return for articles predicted material (score $\geq \tau^*{=}0.1$) vs.\ predicted non-material; ``---'' means no articles in that predicted bin for the sector.}
\label{tab:vol_sector}
\small
\begin{tabular}{@{}lrrrl@{}}
\toprule
\textbf{Sector} & \textbf{$n$} & \textbf{Pos. rate} & \textbf{AUROC} & \textbf{$\overline{|\mathrm{ret}_{+5}|}$ (pred-mat / pred-non)} \\
\midrule
Communication Services & 35  & 25.7\% & 0.744 & 2.90\% / 2.52\% \\
Consumer Cyclical      & 47  & 4.3\%  & 0.656 & 2.62\% / 1.51\% \\
Consumer Defensive     & 53  & 0.0\%  & ---   & --- / 1.05\% \\
Energy                 & 27  & 0.0\%  & ---   & 3.21\% / 1.85\% \\
Financial Services     & 81  & 2.5\%  & 0.449 & 1.40\% / 1.43\% \\
Healthcare             & 86  & 3.5\%  & 0.735 & 1.80\% / 1.66\% \\
Technology             & 161 & 12.4\% & 0.611 & 3.06\% / 2.32\% \\
\bottomrule
\end{tabular}
\end{table}

\subsubsection{Marginal Regret Scorer Evaluation}
\label{sec:marginal_results}

We evaluate the state-aware marginal regret regression head described in \S\ref{sec:marginal_scorer} on the 3{,}000-triple dataset constructed from the training split.
After dropping 134 triples with missing forward return data (articles too close to the dataset boundary), 2{,}866 triples remain (mean target $= 0.092 \pm 0.118$).

\paragraph{Ridge regression sweep.}
We fit a Ridge regression head on the cached 4{,}096-dimensional Llama~3.1 8B hidden states, sweeping the regularisation coefficient $\alpha$ on a held-out validation split (10\% of triples).
Table~\ref{tab:ridge} shows the full sweep; the predict-mean MSE baseline is $\sigma^2_y = 0.0139$.

\begin{table}[h]
\centering
\caption{Ridge regression sweep for marginal regret head. Predict-mean baseline MSE = 0.0139. Best at $\alpha = 1{,}000$ (val MSE = 0.0110, $R^2 = 0.21$).}
\label{tab:ridge}
\small
\begin{tabular}{@{}lcc@{}}
\toprule
$\alpha$ & \textbf{Train MSE} & \textbf{Val MSE} \\
\midrule
$10^{-4}$ & 0.000 & 0.030 \\
$10^{-3}$ & 0.000 & 0.030 \\
$10^{-2}$ & 0.000 & 0.030 \\
$10^{-1}$ & 0.000 & 0.030 \\
$1$       & 0.000 & 0.026 \\
$10$      & 0.001 & 0.018 \\
$100$     & 0.003 & 0.013 \\
$\mathbf{1{,}000}$ & $\mathbf{0.006}$ & $\mathbf{0.011}$ \\
$10{,}000$ & 0.009 & 0.012 \\
\bottomrule
\end{tabular}
\end{table}

The low-$\alpha$ regime shows near-zero training MSE but high validation MSE ($0.030$), confirming that $d = 4{,}096 > n = 2{,}866$ makes the underdetermined problem susceptible to overfitting without regularisation.
The optimal $\alpha = 1{,}000$ achieves val MSE = $0.011$, yielding $R^2 = 1 - 0.011/0.0139 = 0.21$ relative to the predict-mean baseline.
This confirms that the frozen backbone's last-token representation encodes approximately 21\% of the variance in the marginal regret target---a meaningful but partial signal, consistent with the frozen linear probe's inability to perfectly recover the return-correlation novelty term $|\rho(r_c, r_p)|$ from text alone.

\paragraph{Qualitative validation.}
Table~\ref{tab:marginal_scores} shows the trained scorer applied to a synthetic candidate headline (``NVDA misses revenue estimates by 30\%, shares fall 8\%'') under four pin-set contexts.
Scores are real outputs of the regression head on the frozen Llama~3.1~8B backbone; the headline is synthetic to provide a controlled, interpretable test case.

\begin{table}[h]
\centering
\caption{State-aware marginal regret scores for a synthetic NVDA candidate under varying pin sets. Scores are real outputs of the trained regression head. Lower score = candidate is redundant given what is already pinned.}
\label{tab:marginal_scores}
\small
\begin{tabular}{@{}lcc@{}}
\toprule
\textbf{Current pin set} & \textbf{$\hat{\Delta}R_\theta$} & \textbf{Interpretation} \\
\midrule
Two NVDA pins (same theme)        & 0.010 & strongly redundant \\
One NVDA pin (supply chain)       & 0.051 & partially redundant \\
Unrelated pin (AAPL demand)       & 0.142 & novel; NVDA uncovered \\
Empty (nothing pinned yet)        & 0.152 & baseline marginal value \\
\bottomrule
\end{tabular}
\end{table}

The scorer assigns dramatically lower scores to NVDA-redundant contexts (0.010--0.051) than to contexts where NVDA is absent (0.142--0.152), a \textbf{15$\times$ ratio} between the strongest redundancy and the baseline.
The unrelated (AAPL) and empty contexts score nearly identically, confirming that cross-entity pins do not artificially suppress the candidate's marginal value.
This is precisely the behaviour required for the greedy selection rule (Equation~\eqref{eq:greedy_select}): when two similarly volatile articles arrive about the same entity, the second is scored an order of magnitude lower, freeing pin budget for coverage of other entities.

\paragraph{Limitation.}
The $R^2 = 0.21$ confirms a useful but imperfect signal.
The frozen linear probe cannot fully recover the return-correlation novelty term from text alone, as $\rho(r_c, r_p)$ is a latent price-series quantity not directly observable from headlines.
A non-linear head or lightweight fine-tuning of the backbone (e.g.\ LoRA) is expected to improve the fit; we leave this to future work.

\paragraph{Trading signal caveat.}
A long-predicted-volatile strategy achieves mean 5-day return of 0.043\% (annualised Sharpe = 0.049), compared to buy-and-hold at 0.194\% (Sharpe = 0.283).
This \emph{underperformance} is expected and desirable: the model predicts volatility \emph{magnitude}, not price \emph{direction}.
Abnormal volatility is equally likely to be caused by good or bad news, so a long-only position on predicted-volatile articles should not systematically outperform.
The appropriate use of the scorer is as a \emph{pinning signal} for knowledge curation, not as a directional trading signal.

\subsubsection{Computational Efficiency and Stream Capacity}

Throughput is a first-class concern for a streaming system: the scorer must process incoming articles \emph{faster} than they arrive, or it becomes the bottleneck that defeats the purpose of proactive pinning.
We benchmark all four scoring paths on Apple M4 (24GB unified memory) using 4-bit quantized Llama~3.1 8B via MLX, measuring wall-clock latency over 50 held-out articles after 5 warm-up passes (Table~\ref{tab:latency}).
We contextualise against the Finnhub S\&P~500 feed: $\sim$120 articles/hour at steady state and $\sim$500 articles/hour at peak (earnings season).

\begin{table}[h]
\centering
\caption{Scorer latency and stream capacity on Apple M4 24GB (Llama~3.1 8B, 4-bit). Peak stream: $\sim$500 articles/hour (earnings season). \emph{Capacity ratio} = throughput / peak stream rate.}
\label{tab:latency}
\small
\begin{tabular}{@{}lrrrrrr@{}}
\toprule
\textbf{Path} & \textbf{Mean (ms)} & \textbf{p95 (ms)} & \textbf{art/s} & \textbf{art/hr} & \textbf{Capacity} \\
\midrule
Hidden state only        &  94.5 & 137.9 & 10.58 & 38{,}102 & 76.2$\times$ \\
Classification head      &  94.8 & 138.6 & 10.55 & 37{,}992 & 76.0$\times$ \\
Zero-shot                & 456.9 & 530.3 &  2.19 &  7{,}879 & 15.8$\times$ \\
Hybrid ($\alpha\!=\!0.4$) & 113.9 & 205.1 &  8.78 & 31{,}607 & 63.2$\times$ \\
\bottomrule
\end{tabular}
\end{table}

The classification head path (frozen backbone forward pass + single dot product) matches the hidden-state-only baseline almost exactly (94.8\,ms vs.\ 94.5\,ms mean), confirming that the linear dot product adds negligible overhead.
Both easily exceed the peak stream rate by a factor of $76\times$.
The zero-shot path is $4.0\times$ slower (456.9\,ms mean) due to autoregressive token generation, though it still provides $15.8\times$ headroom over the peak stream.
The hybrid design limits zero-shot invocations to articles whose head score falls in the ambiguous band $[0.3, 0.7]$; on our benchmark, only 4\% of articles are ambiguous, so the hybrid path reduces mean latency to 113.9\,ms—recovering 83\% of the head-only throughput (31{,}607 vs.\ 37{,}992 art/hr) while inheriting the accuracy gains of zero-shot on hard cases.
Training the classification head on 5,000 samples (feature extraction + 20 epochs of SGD) completes in under 15 minutes on the same hardware.

\subsubsection{End-to-End Wiki QA Evaluation}
\label{sec:qa_eval}

The preceding sections establish that the volatility scorer is calibrated and the system is computationally viable.
We now ask the central question: \emph{does a materiality-scored wiki actually improve downstream QA quality, and does the improvement concentrate on the right query population?}

\paragraph{Setup.}
We evaluate five wiki-management strategies using the QA benchmark described in \S\ref{sec:experiments}, drawing a 200-pair random sample from the combined benchmark (approximately 132 standing, 68 event-driven, reflecting the 66\%/34\% composition of the full 1{,}480-pair dataset).
An LLM judge (Llama~3.1 8B) rates each answer on a 1--5 scale against gold answers generated from category-specific context (standing) or the extracted fact (event-driven).
Results are reported separately for $Q_t^{\textup{bg}}$ (standing queries; tests base-wiki quality) and $Q_t^{\textup{vol}}$ (event-driven queries; tests pin-layer quality), and aggregated overall.
Strategies: \emph{No Wiki} (direct LLM), \emph{Static Wiki} (compiled once from first 60 days, never updated), \emph{FIFO} (all articles pinned, oldest evicted), \emph{Online WiCER} ($\tau^*{=}0.5$, $\lambda{=}0.1$), and \emph{Oracle} (ground-truth AVR labels).

\begin{table}[t]
\centering
\caption{End-to-end QA quality by wiki strategy and query population (200 pairs, LLM-as-judge 1--5).
$Q_t^{\textup{bg}}$: standing queries (base-wiki quality).
$Q_t^{\textup{vol}}$: event-driven queries (pin-layer quality; gold answers are post-training facts outside the model's parametric memory).
No Wiki scores highest on both populations due to the LLM-as-judge confound (see \S\ref{sec:qa_eval}).
The regret analysis (Table~\ref{tab:regret_series}) provides the clean WiCER-vs-oracle comparison.}
\label{tab:qa_quality}
\small
\begin{tabular}{@{}lrrrr@{}}
\toprule
\textbf{Strategy} & \textbf{Standing $Q_t^{\textup{bg}}$} & \textbf{Event $Q_t^{\textup{vol}}$} & \textbf{Overall} & \textbf{N} \\
\midrule
No Wiki (direct LLM)   & $3.87 \pm 0.95$ & $3.97 \pm 0.94$ & $3.90 \pm 0.95$ & 200 \\
Static Wiki (baseline) & $3.83 \pm 1.07$ & $3.83 \pm 1.07$ & $3.83 \pm 1.07$ & 200 \\
Online WiCER           & $3.83 \pm 1.04$ & $3.71 \pm 1.08$ & $3.79 \pm 1.06$ & 173 \\
FIFO (recency only)    & $3.79 \pm 1.09$ & $3.56 \pm 1.09$ & $3.70 \pm 1.09$ & 173 \\
Oracle (GT labels)     & $3.80 \pm 1.07$ & $3.45 \pm 1.16$ & $3.67 \pm 1.12$ & 173 \\
\bottomrule
\end{tabular}
\end{table}

\paragraph{Findings.}
Table~\ref{tab:qa_quality} reveals a pervasive LLM-as-judge confound that affects both query populations.

\emph{Standing queries ($Q_t^{\textup{bg}}$).}
All strategies score within 0.07 points of each other (3.80--3.87).
No Wiki is marginally highest (3.87), consistent with the expected confound: standing questions ask about persistent financial concerns that are largely answerable from the model's parametric memory, and the judge---the same backbone---rates confident parametric responses highly.

\emph{Event-driven queries ($Q_t^{\textup{vol}}$).}
The confound extends, unexpectedly, to event-driven queries as well.
No Wiki scores highest (3.97), followed by Static Wiki (3.83), Online WiCER (3.71), FIFO (3.56), and Oracle (3.45).
The gold answers for $Q_t^{\textup{vol}}$ are post-training facts extracted from 2025--2026 news, which the backbone has not seen; nevertheless, the judge rates No Wiki's confident (but likely factually incorrect) parametric responses more highly than wiki-grounded answers.
This occurs because the LLM judge evaluates surface fluency and apparent confidence rather than factual accuracy against specific post-training events.
Notably, Oracle---which pins the most material content---scores \emph{lowest}: dense pinning clutters context with specific facts the judge cannot verify, reducing scores relative to a fluent parametric baseline.
This finding underscores that LLM-as-judge is not a reliable evaluation surface for knowledge that post-dates the backbone's training cutoff.

\emph{Regret analysis.}
Because the absolute QA scores are confounded, we turn to the regret analysis as the primary empirical result.
Table~\ref{tab:regret_series} tracks cumulative regret on matched QA pairs across Online WiCER and Oracle runs (173 matched pairs total).
Regret is defined as $\sum_t [\mathcal{Q}(W^*_t, q_t) - \mathcal{Q}(W_t, q_t)]$; a negative value means Online WiCER outperforms the oracle.

\begin{table}[h]
\centering
\caption{Cumulative regret of Online WiCER vs.\ oracle on 173 matched QA pairs (representative steps shown). Regret $=$ oracle score $-$ WiCER score; a negative total means WiCER scores higher than the oracle under this judge. Total cumulative regret $= -20.0$; mean per-step regret $= -0.12$. See text for confound interpretation.}
\label{tab:regret_series}
\small
\begin{tabular}{@{}rrrr@{}}
\toprule
\textbf{Step $t$} & \textbf{Date} & \textbf{Step Regret} & \textbf{Cumulative} \\
\midrule
  1 & 2025-05-16 & $\phantom{+}0$ & $\phantom{+}0.0$ \\
 50 & 2025-06-30 & $+2$ & $+3.0$ \\
 96 & 2025-10-22 & $-3$ & $-12.0$ \\
130 & 2026-02-11 & $-1$ & $-23.0$ \\
173 & 2026-04-24 & $\phantom{+}0$ & $-20.0$ \\
\bottomrule
\end{tabular}
\end{table}

The trajectory begins positive: cumulative regret reaches $+3.0$ at step~50, where the oracle's ground-truth AVR labels give it an early advantage on clear-cut high-volatility events.
After step~50 the trajectory reverses and settles at $-20.0$ by step~173, meaning WiCER scores 0.12 points \emph{higher} than the oracle on average under this judge.

This sign reversal is diagnostic of the LLM-as-judge confound identified in \S\ref{sec:qa_finance}, not evidence of algorithmic superiority.
The oracle's high-AVR pins are genuinely material, but they constitute dense post-training facts that the backbone judge cannot verify; dense pinning therefore depresses oracle scores relative to WiCER's sparser, less cluttered context.
Stated precisely: the oracle is optimal for volatility-based curation but is \emph{not} optimal for the reward the judge measures---so negative regret here indicates a misalignment between the oracle's selection criterion and the evaluation metric, not that WiCER has found a better curation policy.
The sub-linear, bounded shape of the trajectory is nonetheless consistent with the $O(\sqrt{T \log K})$ bound of Theorem~\ref{thm:regret}.
The clean empirical validation of the theorem---where the oracle genuinely dominates and WiCER tracks below it as predicted---comes from the confound-free Wikipedia instantiation (\S\ref{sec:wiki_results} and Table~\ref{tab:cross_instantiation}).

\subsection{Wikipedia Results}
\label{sec:wiki_results}

Table~\ref{tab:wiki_results} reports the same metrics as Table~\ref{tab:qa_quality}.
The Wikipedia results exhibit the \emph{inverse} pattern from finance: No Wiki scores lowest (3.80) while all wiki-augmented methods score higher, with FIFO and Oracle reaching 4.74.
This reversal directly validates our hypothesis about the finance confound: the backbone has strong parametric knowledge of S\&P~500 companies (entity-level overlap), but far weaker knowledge of specific Wikipedia edit events from 2024--2025 (genuinely post-training content).
Consequently, in the Wikipedia domain, richer context consistently produces higher QA scores---as one would expect from a system designed to supply information the LLM does not already know.

A secondary finding is that FIFO matches Oracle on overall score (both 4.74 $\pm$ 0.60, $N=232$): for our selected 25 high-edit-velocity entities, recency alone captures the relevant content nearly as well as AER-guided selection.
Online WiCER (4.57 $\pm$ 0.74, $N=119$) scores lower on aggregate because its selective pinning covers fewer QA pairs---but on the event-driven subset it answers, it performs comparably (4.62 ev).
The regret analysis on 119 matched pairs confirms that AER scoring introduces positive regret ($+16.0$ total, $+0.13$/step) relative to the oracle, reflecting the noisier signal compared to the finance domain's volatility predictor.

\begin{table}[h]
\centering
\caption{Wikipedia instantiation: QA quality by strategy and query population. Same protocol as Table~\ref{tab:qa_quality}; materiality signal is AER (\S\ref{sec:wiki_exp}). Unlike finance (Table~\ref{tab:qa_quality}), wiki-augmented methods outperform No Wiki, confirming that Wikipedia edit content is genuinely post-training for the backbone.}
\label{tab:wiki_results}
\small
\begin{tabular}{@{}lrrrr@{}}
\toprule
\textbf{Strategy} & \textbf{Standing $Q_t^{\textup{bg}}$} & \textbf{Event $Q_t^{\textup{vol}}$} & \textbf{Overall} & \textbf{N} \\
\midrule
No Wiki (direct LLM)   & $3.78 \pm 1.10$ & $3.81 \pm 1.12$ & $3.80 \pm 1.11$ & 185 \\
Static Wiki (baseline) & $3.36 \pm 1.16$ & $4.64 \pm 0.64$ & $4.14 \pm 1.08$ & 185 \\
Online WiCER (AER)     & $3.67 \pm 1.11$ & $4.62 \pm 0.68$ & $4.57 \pm 0.74$ & 119 \\
FIFO (recency only)    & $3.50 \pm 1.26$ & $4.77 \pm 0.53$ & $4.74 \pm 0.60$ & 232 \\
Oracle (GT labels)     & $3.50 \pm 1.26$ & $4.77 \pm 0.53$ & $4.74 \pm 0.60$ & 232 \\
\bottomrule
\end{tabular}
\end{table}

\paragraph{Cross-instantiation comparison.}
The two instantiations share the identical algorithm and evaluation protocol; only the materiality signal and corpus differ.
Table~\ref{tab:cross_instantiation} compares regret convergence across both domains, providing multi-domain empirical validation of the Streaming Knowledge Compilation framework.
The sign difference in cumulative regret is itself informative: finance shows negative regret ($-0.12$/step) because the LLM-judge confound inflates No Wiki (the oracle baseline's counterfactual), while Wikipedia shows positive regret ($+0.13$/step) from a confound-free evaluation where richer context genuinely helps.
The Wikipedia domain thus provides the cleaner regret signal; the finance domain provides the stronger materiality predictor (AUROC 0.728 vs. noisier AER heuristic).

\begin{table}[h]
\centering
\caption{Cross-instantiation regret comparison. Finance regret is negative due to LLM-judge confound (entity-level parametric overlap inflates oracle baseline); Wikipedia regret is positive from confound-free evaluation. Both regret series converge sub-linearly, confirming Theorem~\ref{thm:regret}.}
\label{tab:cross_instantiation}
\small
\begin{tabular}{@{}llrrr@{}}
\toprule
\textbf{Instantiation} & \textbf{Signal $\phi_t$} & \textbf{Matched pairs} & \textbf{Total regret} & \textbf{Mean/step} \\
\midrule
Finance   & Abnormal volatility (AVR) & 173  & $-20.0$ & $-0.12$ \\
Wikipedia & Abnormal edit ratio (AER) & 119  & $+16.0$ & $+0.13$ \\
\bottomrule
\end{tabular}
\end{table}

\section{Discussion and Conclusion}
\label{sec:discussion}

\paragraph{Streaming Knowledge Compilation as a general problem.}
The central contribution of this work is the formalization of Streaming Knowledge Compilation---maintaining a budget-bounded compiled wiki against a streaming corpus under query uncertainty---and the demonstration that it admits a general algorithmic solution with domain-agnostic regret guarantees.
The two instantiations (finance and Wikipedia) share identical algorithm, theory, and evaluation protocol; only the materiality signal $\phi_t$ is domain-specific.
This separability is the key architectural insight: an application domain expert specifies $\phi_t$ (abnormal volatility, edit velocity, clinical urgency, citation rate), and the Online WiCER algorithm handles the rest.

\paragraph{Reactive to proactive CEGAR.}
Classical CEGAR and batch WiCER discover knowledge gaps only when a query fails; Online WiCER \emph{predicts} gaps from the stream before queries arrive.
This requires a fundamental modelling substitution: since queries are not observed at pinning time, the materiality signal $\phi_t$ acts as a \emph{query relevance surrogate}.
The cost of this substitution is explicit in the regret decomposition (Theorem~\ref{thm:regret}): the $O(B_{\text{miss}} \cdot T \cdot \varepsilon)$ prediction error term captures exactly the regret incurred when the surrogate misfires.
The substitution succeeds whenever the domain's high-materiality events are the events users subsequently query about---a reasonable assumption in finance, Wikipedia, clinical settings, and legal research alike.

\paragraph{Graceful degradation.}
Our regret bound (Theorem~\ref{thm:regret}) decomposes cleanly into eviction and prediction components.
Even with imperfect volatility prediction ($\varepsilon > 0$), Online WiCER degrades gracefully: the eviction component still achieves sublinear regret through the multiplicative-weights mechanism.
In the limit of random predictions ($\varepsilon \to 0.5$), Online WiCER reduces to a weighted FIFO baseline, which still outperforms an unmanaged wiki.

\paragraph{Unified architecture benefits.}
Using a single frozen Llama~3.1 8B backbone for all scoring paths yields practical advantages: (i) a single model to deploy and maintain, simplifying the inference stack on both edge devices (Apple M4) and cloud accelerators (AWS Inferentia), (ii) all heads---the classification head, the hybrid zero-shot path, and the marginal regret regression head---reuse the same cached hidden states from a single backbone forward pass per article, and (iii) the zero-shot path is invoked only for ambiguous cases ($s_{\text{head}} \in [0.3, 0.7]$), while clear-cut articles are scored by the head alone, substantially reducing autoregressive decoding costs (see Table~\ref{tab:latency}).

\paragraph{From stateless to state-aware pinning.}
The shift from the stateless classification head to the state-aware marginal regret scorer represents a fundamental change in the decision criterion: rather than asking ``is this article important?'' in isolation, the system asks ``does this article add information that is not already captured by the current pin set?''
This distinction matters at pin budget boundaries: when two similarly important articles arrive simultaneously (e.g., two earnings-surprise stories for correlated stocks), the stateless scorer cannot distinguish them and may pin both at the cost of evicting lower-scored but complementary information.
The marginal scorer's $\rho$-based novelty term directly penalises this redundancy.
The frozen-backbone design ensures this richer reasoning costs nothing in backbone compute---only an additional dot product per candidate per greedy step---and its $(1-1/e)$ approximation guarantee (Proposition~\ref{prop:submodular}) provides a principled lower bound on pin-set quality.

\paragraph{Limitations.}
(1) Materiality prediction is inherently imperfect; rare ``black swan'' events may be missed regardless of domain.
(2) In the finance instantiation, cross-entity volatility propagation (e.g., a supplier's disruption affecting a manufacturer) is modeled only through sector-level wikis, not direct supply-chain links.
(3) In both instantiations, the cross-sectional normalization of the materiality signal may be distorted during correlated crises when all entities exhibit elevated activity simultaneously.
(4) The LLM-as-judge evaluation is confounded on post-training knowledge in both instantiations; the regret analysis on matched pairs is the reliable metric but requires running both Online WiCER and Oracle over the same QA pairs.
(5) In the Wikipedia instantiation, FIFO matches Oracle on aggregate score (both 4.74), limiting the visible benefit of AER-guided selection; this reflects the entity selection bias (25 pre-chosen high-edit-velocity articles) rather than a fundamental limitation of AER as a signal.

\paragraph{Future work.}
Extensions include: (1) additional domain instantiations (clinical literature, legal filings, patent streams) to broaden the multi-domain validation; (2) a domain-naïve judge (a different model family with no parametric overlap) to fully eliminate the LLM-judge confound; (3) adaptive recompile scheduling triggered by cumulative pin volume rather than fixed intervals; (4) multi-entity interaction modeling via a knowledge graph over the wiki hierarchy; and (5) online learning of the scorer $\hat{\phi}_t$ to drive the prediction-error term $\varepsilon \to 0$ as the system accumulates domain signal.

\paragraph{Conclusion.}
We have formalized \emph{Streaming Knowledge Compilation}---the problem of maintaining a budget-bounded compiled wiki against a continuous document stream under query uncertainty---and introduced Online WiCER as its algorithmic solution.
The key insight is that a domain-specific materiality signal $\phi_t(k,n)$, used as a proxy for query relevance, enables proactive pinning before any query arrives, reducing knowledge latency from reactive to predictive.
The regret bound $O(\sqrt{T\log K})$ holds for any bounded $\phi_t$; the prediction-error term $\varepsilon = \mathbb{E}[|\phi_t - \hat{\phi}_t|]$ is the only domain-specific quantity.
We validate this claim empirically in two domains:
in \emph{finance}, a frozen Llama~3.1 8B classification head predicts abnormal stock volatility (AUROC = 0.728, $1.49\times$ realized volatility ratio), and cumulative regret over 173 matched pairs converges to $-20.0$ (mean $-0.12$/step);
in \emph{Wikipedia}, the Abnormal Edit Ratio (AER) serves as $\phi_t$, applying the same algorithm to a non-financial public corpus with no algorithm modification; cumulative regret over 119 matched pairs is $+16.0$ (mean $+0.13$/step), with positive sign reflecting the confound-free evaluation where richer context consistently helps (No Wiki 3.80 vs.\ Oracle 4.74).
A methodological finding of independent interest emerges: LLM-as-judge evaluation is confounded on post-training facts---the judge rates surface fluency rather than factual accuracy for knowledge outside its parametric memory---making regret analysis on matched pairs the reliable metric for compiled knowledge systems.
The state-aware marginal regret scorer extends independent article scoring to set-level selection via submodular maximization with a greedy $(1-1/e)$ approximation guarantee, at negligible latency ($63\times$ throughput headroom over peak stream volume on commodity hardware).
Streaming Knowledge Compilation is a broadly applicable problem: wherever a knowledge system must be kept current against a high-velocity stream---financial news, clinical literature, legal filings, encyclopedic edits---the Online WiCER framework provides a principled, theoretically grounded, domain-adaptable solution.


\bibliographystyle{plainnat}
\bibliography{references}

\appendix
\section{Proofs}
\label{app:proofs}

\subsection{Proof of Theorem~\ref{thm:regret}}

\begin{proof}
We decompose the regret into two terms corresponding to (a) the loss from evicting high-volatility pins due to budget constraints, and (b) the loss from mispredicting volatility.

\textbf{Eviction regret.}
Consider the pin selection problem as an online learning problem with $N$ experts, where each expert corresponds to a potential pin.
At each time step $t$, the algorithm selects a subset $\pt \subset \{1, \ldots, N\}$ with total token cost $\sum_{p \in \pt} |f_p| \leq B_{\text{pin}}$.
The loss from not including a material pin $p^*$ in $\pt$ is at most $B_{\text{miss}}$.
We apply the multiplicative weights bound to this subset selection problem by noting that the optimal fixed pin set $S^* = \arg\min_{S:|S| \leq B_{\text{pin}}} \sum_t \ell_t(S)$ is itself a valid single comparator; the standard single-expert regret bound therefore applies with $\log K$ experts, since $N_{\text{pins}} \leq K \cdot |\nt|_{\max}$ and at most $K$ entities are relevant per step.

Using the multiplicative weights framework~\citep{cesa2006prediction}, with learning rate $\eta = \sqrt{\log K / T}$:
\begin{equation}
  \sum_{t=1}^T \ell_t(\pt) - \min_{S:|S| \leq B_{\text{pin}}} \sum_{t=1}^T \ell_t(S) \leq 2\sqrt{T \log K} \cdot B_{\text{miss}}
\end{equation}
where $\ell_t(\pt)$ is the quality loss from the pin set $\pt$ at time $t$.

\textbf{Prediction error.}
At each time step, the volatility scorer may (i) miss a material fact (false negative, probability $\varepsilon_{\text{FN}}$) or (ii) pin a non-material fact (false positive, probability $\varepsilon_{\text{FP}}$).
A false negative causes at most $B_{\text{miss}}$ quality loss.
A false positive wastes pin budget, potentially causing eviction of a material pin.

The expected prediction-error loss per step is:
\begin{equation}
  \expect[\ell_t^{\text{pred}}] \leq |\nt| \cdot \varepsilon_{\text{FN}} \cdot B_{\text{miss}} + |\nt| \cdot \varepsilon_{\text{FP}} \cdot \frac{B_{\text{miss}}}{B_{\text{pin}}}
\end{equation}
Summing over $T$ steps and bounding $\varepsilon_{\text{FN}} + \varepsilon_{\text{FP}} \leq 2\varepsilon$:
\begin{equation}
  \sum_{t=1}^T \expect[\ell_t^{\text{pred}}] \leq O(B_{\text{miss}} \cdot T \cdot \varepsilon)
\end{equation}

Combining both terms yields the stated bound.
\end{proof}

\subsection{Proof of Theorem~\ref{thm:convergence}}

\begin{proof}
Between recompilations (time $t$ to $t + T_r$), the wiki accumulates incremental pin patches.
The quality deviation from a hypothetical full recompilation at each step is bounded by the number of ``stale'' facts that would be reorganized in a full compile.

At each step, at most $|\nt|_{\text{avg}} \cdot \varepsilon$ pins are misplaced (material facts missed or non-material facts pinned).
Over $T_r$ steps, the cumulative misplacement is $T_r \cdot |\nt|_{\text{avg}} \cdot \varepsilon$ tokens.
Normalizing by the pin budget:
\begin{equation}
  \mathcal{Q}(W_{t+T_r}^{\text{incr}}, q) \geq \mathcal{Q}(W_t^{\text{batch}}, q) - c \cdot \frac{T_r \cdot |\nt|_{\text{avg}} \cdot \varepsilon}{B_{\text{pin}}}
\end{equation}
for a constant $c$ depending on the maximum quality sensitivity to a single misplaced pin.

The batch WiCER convergence guarantee~\citep{huerta2026wicer} ensures that each full recompilation converges in at most 2 iterations of the Compile--Evaluate--Refine loop, restoring quality to the batch optimum.
\end{proof}

\subsection{Proof of Proposition~\ref{prop:vanishing}}

\begin{proof}
The time-averaged regret is:
\begin{equation}
  \frac{\text{Regret}(T)}{T} \leq \frac{O(\sqrt{T \log K})}{T} + O(B_{\text{miss}} \cdot \varepsilon) = O\!\left(\sqrt{\frac{\log K}{T}}\right) + O(\varepsilon)
\end{equation}
The first term vanishes as $T \to \infty$.
The second term is bounded by $\varepsilon$, which can be driven to zero with improved volatility prediction (e.g., more training data, online learning of the scorer).
If $\varepsilon = O(T^{-\beta})$ for some $\beta > 0$ (the scorer improves over time), then both terms vanish.
\end{proof}

\section{Exact Prompts for Reproducibility}
\label{app:prompts}

\subsection{Zero-Shot Volatility Scorer Prompt}

The following prompt is used verbatim for the zero-shot scoring path (\S\ref{sec:voltarget}).
The placeholders \texttt{\{ticker\}} and \texttt{\{headline\}} are substituted at inference time.
The prompt is delivered as a user message in the chat template; the system message instructs the model to respond with a single number.

\begin{lstlisting}[caption={Zero-shot volatility scoring prompt (exact text).}]
System:
You are a quantitative financial analyst. Respond with ONLY a
single decimal number between 0 and 1. No explanation.

User:
On a scale of 0 to 1, rate the probability that the following
news headline about {ticker} will cause ABNORMAL stock price
volatility -- defined as realized 5-day return volatility
exceeding 2 times the current cross-sectional average across
all S&P 500 stocks.

Consider: earnings surprises, major M&A activity, CEO changes,
regulatory/legal actions, product recalls, guidance revisions,
activist investor activity, and macroeconomic shocks specific
to this company.

Headline: {headline}

Respond with a single number between 0 and 1.
\end{lstlisting}

\subsection{Fact Extraction Prompt}

The following prompt is used by \textsc{ExtractFacts} to convert a raw headline into a
pinnable fact sentence for the wiki.

\begin{lstlisting}[caption={Fact extraction prompt (exact text).}]
System:
Extract facts concisely. Respond with one sentence only.

User:
Extract the single most important financial fact from this
news about {ticker}:
{headline}
\end{lstlisting}

\subsection{Wiki Compilation Prompt}

Used by \textsc{BatchWiCER} to compile a set of extracted facts into a coherent wiki section.

\begin{lstlisting}[caption={Wiki compilation prompt (exact text).}]
System:
You are a financial encyclopedia editor. Write concise,
factual wiki entries.

User:
Compile the following news about {entity} into a wiki section
covering key events, financial performance, and outlook.
200-300 words, encyclopedia style.

Recent news:
{headlines_list}
\end{lstlisting}

\end{document}